\def\assignedStartPage{1} %
\newcommand{\CW}{\mathcal{W}}
\newcommand{\BE}{\mathbb{E}}
\newcommand{\CL}{\mathcal{L}}
\newcommand{\CT}{\mathcal{T}}
\newcommand{\CG}{\mathcal{G}}
\newcommand{\bw}{\mathbf{w}}
\newcommand{\bx}{\mathbf{x}}
\newcommand{\bu}{\mathbf{u}}
\newcommand{\by}{\mathbf{y}}
\newcommand{\meta}{\text{meta}}
\newcommand{\neta}{\text{noisy-meta}}
\newcommand{\train}{\text{train}}
\newcommand{\ce}{\text{CE}}
\newcommand{\mae}{\text{MAE}}
\DeclareMathOperator*{\argmin}{\arg\!\min}
\newtheorem{theorem}{Theorem}
\newtheorem{lemma}[theorem]{Lemma}
\theoremstyle{remark}
\newtheorem*{remark}{Remark}
\begin{document}

\title{Do We Really Need Gold Samples for Sample Weighting under Label Noise?}

\author{Aritra Ghosh\\
University of Massachusetts Amherst\\
Amherst, MA, USA\\
{\tt\small arighosh@cs.umass.edu}
\and
Andrew Lan\\
University of Massachusetts Amherst\\
Amherst, MA, USA\\
{\tt\small andrewlan@cs.umass.edu}
}

\maketitle
\thispagestyle{empty}

\begin{abstract}
    Learning with labels noise has gained significant traction recently due to the sensitivity of deep neural networks under label noise under common loss functions. Losses that are theoretically robust to label noise, however, often makes training difficult. Consequently, several recently proposed methods, such as Meta-Weight-Net (MW-Net), use a small number of unbiased, clean samples to learn a weighting function that downweights samples that are likely to have corrupted labels under the meta-learning framework. 
    However, obtaining such a set of clean samples is not always feasible in practice.
    In this paper, we analytically show that one can easily train MW-Net without access to clean samples simply by using a loss function that is robust to label noise, such as mean absolute error, as the meta objective to train the weighting network. We experimentally show that our method beats all existing methods that do not use clean samples and performs on-par with methods that use gold samples on benchmark datasets across various noise types and noise rates. 
\end{abstract}
\section{Introduction}
Although deep neural networks (DNNs) have achieved impressive performance across many applications, they remain vulnerable to noisy labels. The  expressive power of DNNs enables them to learn arbitrary smooth non-linear functions; however, it also means that DNNs can easily overfit to noisy labels, which negatively affects their generalization ability. Learning under label noise is of utmost importance in the big-data regime since large amounts of labels are collected using crowd-sourcing, resulting in significant label noise \cite{reed,xiao2015learning,webly,veit2017learning,reed}; see \cite{survey,survey-new,github} for detailed surveys and recent results on learning under label noise. 
Methods for mitigating label noise can be broadly categorized into two groups based on whether they use a small number of clean samples in the learning process. 

In the first group, most algorithms do not assume access to any clean samples; instead, they use loss functions that are robust to label noise to handle noisy labels implicitly \cite{unhinged,ghosh2017,generalized-ce}. 
Unfortunately, the common cross entropy loss used to train DNNs is particularly sensitive to noisy labels. 
As an alternative, in \cite{ghosh2017}, the authors have shown that the non-convex mean absolute error (MAE) loss is robust to label noise under some mild assumptions. However, training DNNs under the MAE loss on large datasets is often difficult. 
In \cite{generalized-ce}, the authors proposed generalized cross-entropy loss, a generalization of MAE loss and cross-entropy loss, that provides an effective trade-off between theoretical guarantees and the ease of DNN training \cite{generalized-ce}. However, the performance of these theoretically robust losses on large-scale image datasets is often limited.

In the second group, most methods use a small number of clean samples to compute the noise transition matrix, predict which samples are noisy, and correct the loss functions by learning a sample re-weighting strategy \cite{forward,reed,mwnet}. 
One method is to use a loss correction approach \cite{natarajan} when the noise rates are known or can be accurately estimated from data. 
Another approach is to use sample re-weighting to downweight samples for which the classifier incurs large losses on and are thus likely to be noisy. 
Many early methods use heuristics to compute sample weights.  
However, a pre-determined weighting scheme is not very effective and cannot leverage real data. Therefore, many recent methods proposed to adaptively learn the sample weighting function from the additional clean data \cite{mwnet,l2rw,mentornet}. Meta-Weight-Net (MW-Net) \cite{mwnet}, a highly effective method, uses recent advances in \emph{meta-learning} to jointly learn the classifier network and the weighting network. MW-Net uses a small number of clean samples as meta samples to learn the sample weighting strategy used for the classifier network via the meta objective. Experimentally, MW-Net achieves impressive performance on real-world, large-scale image datasets under label noise \cite{mwnet}. 
However, the most apparent drawback of MW-Net and other methods in this group is that we may not have access to clean samples in real-world applications.

\textbf{Contributions}
We make a surprising observation that it is very easy to adaptively learn sample weighting functions, even when we do not have access to any clean samples; we can use \emph{noisy} meta samples to learn the weighting function if we simply change the meta loss function. We theoretically show that the meta-gradient direction for optimizing the weighting network remains the same when using noisy meta samples with meta losses that are robust to label noise such as the MAE \cite{ghosh2017}. Thus, we can optimize the classifier network using the cross-entropy loss and optimize the weighting network using the MAE loss, both with noisy samples. 
Although the MAE loss is difficult to optimize for deep classification networks \cite{generalized-ce}, they are easy to optimize for shallow networks that are used as the weighting network \cite{ghosh2017}. Our method closes the gap on learning the weighting function adaptively without the need to access clean meta samples; this setup closely resembles real-world problem settings where clean samples are often missing.

First, we theoretically show that it is \emph{possible} to learn the weighting function even without any access to clean meta samples. Then, we experimentally show that our proposed method (using noisy meta samples) performs on par with MW-Net (using clean meta samples) and beats existing methods on several benchmark image datasets. Thus, the simple observation that the meta-gradient direction remains the same for noisy meta samples under meta losses that are robust to label noise alleviates the need for clean samples when learning sample re-weighting strategy under label noise. 
\footnote{Our code is available at \url{https://github.com/arghosh/RobustMW-Net}.}

\section{Problem Setup and Label Noise}
We consider the classification problem setting with a training set $\{(\bx_i^{\text{train}},\by_i^{\text{train}})\}_{i=1}^N$, a meta (or validation) set $\{(\bx_j^{\text{meta}},\by_j^{\text{meta}})\}_{j=1}^M$, and a test set $\{(\bx_o^{\text{test}},\by_o^{\text{test}})\}_{o=1}^O$ where $\bx_i$ is the feature vector of the $i^{\text{th}}$ sample and $\by_i\in \{0,1\}^K$ is the class label vector with $K$ total classes. We denote the classifier  network prediction as $f(\bx, \bw)$ where $\bw$ is the classifier network weights. 
We denote the clean (unknown) label for an instance $\bx_i$ as $\by_i^{\text{true}}$. We consider label noise where feature vectors are clean but the labels vectors are corrupted \cite{natarajan}. We consider three types of label noise. The uniform (or symmetric) noise with rate $\eta\in [0,1]$, corrupts the true label with probability $\eta$ uniformly distributed among all classes. Thus, the true class label remains the same with probability $(1-\eta)+\frac{\eta}{K}$ and the true label is corrupted to all other classes $c\neq \by_i^{\text{true}}$ with probability $\frac{\eta}{K}$.
The flip noise model, with noise rate $\eta$, corrupts the true labels with probability $\eta$ to a random single class.  Thus, the correct class label remains the same with probability $(1-\eta)$ and the true label is corrupted to a single class $c\neq \by_i^{\text{true}}$ with probability ${\eta}$.
The flip2 noise model, with noise rate $\eta$, corrupts the true labels with probability $\eta$ to two random other classes.  Thus, the correct class label remains the same with probability $(1-\eta)$ and the true label is corrupted to two other classes $c_1,c_2\neq \by_i^{\text{true}}$ with probability $\frac{\eta}{2}$. Figure~\ref{fig:noise} shows example confusion matrices for these three noise models. 
	\begin{figure}[pt]%
			\includegraphics[width=0.49\textwidth]{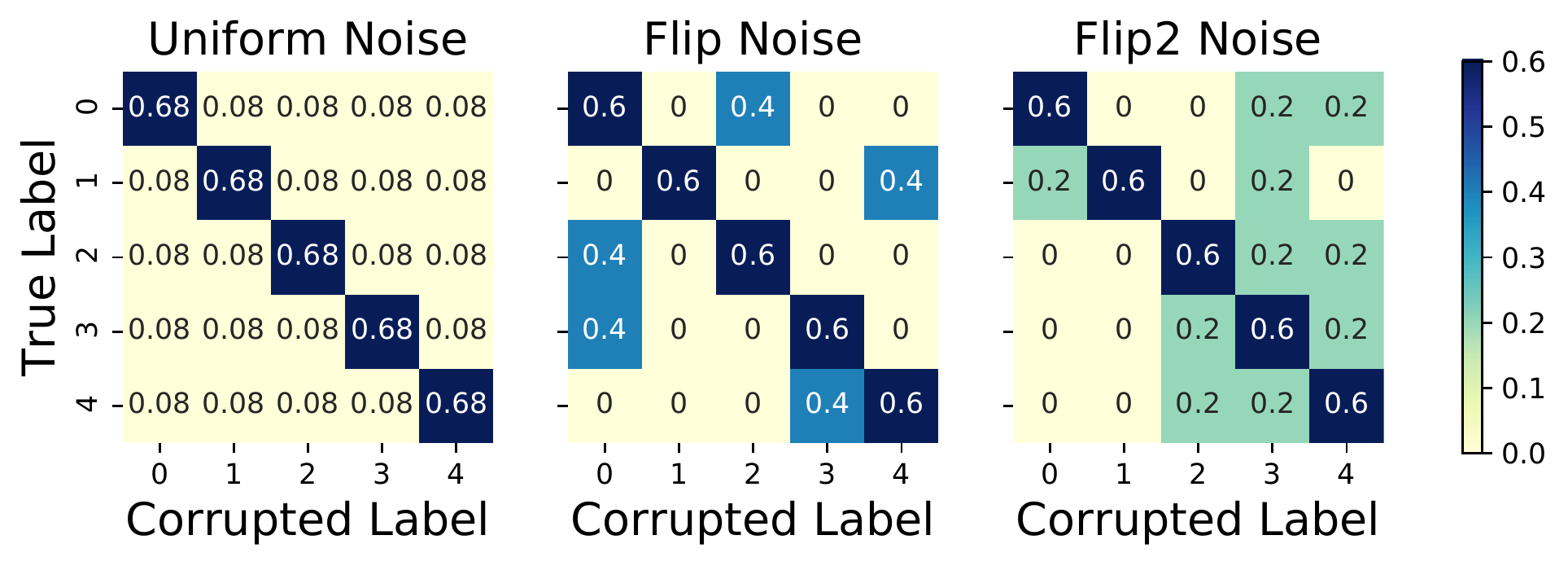} 
		\caption{Confusion matrices for the noise models (K=5,$\eta$=0.4). 
		} \label{fig:noise}
		\vspace{-0.1in}
	\end{figure}

We assume that the training dataset $\{(\bx_i^{\text{train}},\by_i^{\text{train}})\}_{i=1}^N$ is corrupted with an unknown noise model, but the test set $\{(\bx_o^{\text{test}},\by_o^{\text{test}})\}_{o=1}^O$ remains clean. 
Deep networks are often optimized with the cross-entropy (CE) loss; we obtain the optimal classifier network  parameter $\bw^{\ast}$ by minimizing CE loss on the training dataset as
\begin{align*}
\resizebox{0.43\textwidth}{!}{$
\bw^{\ast} =\argmin_{\bw}  \frac{1}{N}\sum_{i=1}^N \ell_{\ce}(\by_i^{\text{train}}, f(\bx_i^{\text{train}},\bw))
$}
\vspace{-0.2in}
\end{align*}
where $\ell_{\ce}$ is the CE loss. 

In sample weighting methods, a common approach is to introduce weighting function $\CW(\bx_i^{\text{train}},\by_i^{\text{train}},\bw; \Theta)$, which is (optionally) parameterized by $\Theta$ to decide the weight for $i^{\text{th}}$ training sample. 
In MW-Net, the function $\CW$ takes sample loss as the input \cite{mwnet}; thus, we can represent the sample weighting function as $\CW(\ell_{\ce}(\by_i^{\text{train}}, f(\bx_i^{\text{train}},\bw));\Theta)$ where $\ell_{\ce}(\by_i^{\text{train}}, f(\bx_i^{\text{train}}, \bw))$ is  the current sample loss and $\Theta$ is the (meta) weight network parameter. 
For simplicity, throughout the paper, we use $\ell(\by_i^{\text{train}},f(\bx_i^{\text{train}}),\bw))$ as $\ell^{i,{\text{train}}}(\bw)$ and $\ell(\by_j^{\text{meta}},f(\bx_j^{\text{meta}}),\bw))$ as $\ell^{j,{\text{meta}}}(\bw)$.
Thus, we can calculate classifier network weights $\bw^{\ast}$ under label noise with the sample weighting parameter $\Theta$ as \cite{mwnet}:
\begin{align*}
\resizebox{0.4782\textwidth}{!}{$
 \argmin_{\bw}  \CL^{\text{train}}(\bw;\Theta)
\triangleq \frac{1}{N}\sum_{i=1}^N \CW(\ell_{\ce}^{i,\text{train}}(\bw);\Theta) \ell_{\ce}^{i,{\text{train}}}(\bw).
$}
\vspace{-0.2in}
\end{align*}

\section{Methodology}
We first discuss the basic setup of MW-Net \cite{mwnet}. We then show that it is possible to train a robust model without access to clean meta samples (thus, $\by_j^{\text{meta}}$ can be noisy as well). 
\paragraph{Meta-Weight-Net.}
MW-Net learns an auxiliary neural network with parameter $\Theta$ for parameterizing the weighting function \cite{mwnet}. Many previous heuristics employ tricks such as removing training samples with a high loss to minimize the effect of noisy samples. Instead of using some fixed heuristics, we can learn a data-driven adaptive non-linear weighting function. MW-Net is an effective way to learn the weighting function using ideas from meta-learning. The idea is to use a clean meta dataset (or validation dataset) $\{\bx_j^{\text{meta}},\by_j^{\text{meta}}\}_{j=1}^M$ ($M\ll N$) 
to learn the optimal weighting function $\Theta^{\ast}$ in a bilevel optimization problem \cite{bilevel}.
\begin{align}
&\min_{\Theta} \CL^{\text{meta}}(\bw^{\ast}(\Theta))\triangleq\frac{1}{M}\sum_{j=1}^M \ell^{j,\text{meta}}(\bw^{\ast}(\Theta)) \label{eq:theta-ast}\\
&\mbox{s.t. } \bw^{\ast}(\Theta)=\argmin_{\bw} \frac{1}{N}\sum_{i=1}^N \CW\big(\ell_{\ce}^{i,\train}(\bw);\Theta\big) \ell_{\ce}^{i,\train}(\bw) \nonumber
\end{align}
Thus, the inner objective is to optimize $\bw^{\ast}$ on $\CL^{\text{train}}$ and the outer objective is to optimize for $\Theta^{\ast}$ on $\CL^{\text{meta}}$ with nested loops. In the outer level optimization, the weighting network computes meta loss $\CL^{\meta}$ using the loss function $\ell^{\cdot,\meta}$; for brevity, we refer $\ell^{\cdot,\meta}$ as the \emph{loss function of the weighting network}  or the \emph{meta loss function} throughout the paper.

In \cite{mwnet}, the authors proposed a bilevel online optimization strategy to iteratively optimize both $\bw$ and $\Theta$ \cite{bilevel}. 
The gradient for $\Theta$, in Eq.~\ref{eq:theta-ast} at time step $t$, requires the function $\hat{\bw}(\Theta)$. The classifier weight function $\hat{\bw}^t(\Theta)$ (note, this is a function of $\Theta$, not the weight parameter) is computed as:
\[\hat{\bw}^t(\Theta) = \bw^t - \alpha\frac{1}{n}\sum_{i=1}^n \CW(\ell_{\ce}^{i,\train}(\bw^t);\Theta) \nabla_{\bw} \ell_{\ce}^{i,\train}(\bw)\Big|_{\bw^t} \]
where $n$ is the minibatch size for the training dataset, and $\alpha$ is the SGD optimizer parameter. 
Using the function $\hat{\bw}^t(\Theta)$, we can take a gradient step for $\Theta^t$ based on the loss on the meta dataset in Eq.~\ref{eq:theta-ast}. 
\begin{equation}
\Theta^{t+1} =\Theta^t-\beta\frac{1}{m}\sum_{j=1}^m\nabla_{\Theta}\ell^{j,\text{meta}}(\hat{\bw}^t(\Theta))\Big|_{\Theta^t}
\label{eq:meta-update}
\end{equation}
where $\beta$ is the weighting network SGD parameter and $m$ is the minibatch size for the meta dataset.  Using the updated $\Theta^{t+1}$, we can take a step towards optimal $\bw^{\ast}$ using simple stochastic gradient descent step
\[\bw^{t+1} \!\! = \!\bw^t \!- \alpha\frac{1}{n}\!\sum_{i=1}^n \CW(\ell_{\ce}^{i,\train}(\bw^t);\Theta^{t+1})\nabla_{\bw}\ell_{\ce}^{i,\train}(\bw)\Big|_{\bw^t}.\]
In essence, MW-Net takes gradient descent step for $\Theta^t$ towards $\Theta^{\ast}$ and takes gradient descent step for $\bw^t$ towards $\bw^{\ast}$ using the updated $\Theta^{t+1}$ in an online fashion. 
In \cite{mwnet}, the authors use a single layer (with 100 hidden nodes) feed-forward networks as the weighting network. Note, the input to the weight network is the scalar loss value $\ell_{\ce}^{i,\text{train}}\big(\bw(\Theta)\big)$ and the output of the weight network is the scalar sample weight value. The choice of weighting network is effective since a single hidden layer MLP is a universal approximator for any continuous smooth functions. Moreover, the weighting function is mostly a monotonically decreasing function under label noise.

\paragraph{Meta-Weight-Net with noisy meta samples.}
We can now analyze MW-Net and show how it can work without access to clean meta labels. We first discuss the gradient descent direction of the weighting network with clean meta samples. We denote the term 
\begin{align}
   \CG(\hat{\bw}) = \frac{1}{m}\sum_{j=1}^m\frac{\partial\ell^{j,\text{meta}}(\hat{\bw})}{\partial \hat{\bw}}\Big|_{\hat{\bw}^t}
    \label{eq:meta-gradient}
\end{align}
as the average \emph{meta-gradient} for $\bw$ on the meta dataset.
Taking derivative of $\ell^{j,\text{meta}}(\hat{\bw}^t(\Theta))$ w.r.t $\Theta$ in Eq.~\ref{eq:meta-update}, the $\Theta$ gradient descent direction is,%
\begin{align}
&\nabla_{\Theta}\sum_{j=1}^m \ell^{j, \meta}\big(\hat{\bw}^t (\Theta)\big)\label{eq:grad}\\
&=\!-\frac{\alpha}{n}\!\sum_{i=1}^n\! \Big(\CG(\hat{\bw})^{\intercal} \frac{\partial \ell_{\ce}^{i,\train}(\bw)}{\partial \bw}\Big|_{\bw^t}\!\Big) \frac{\partial \CW(\ell_{\ce}^{i,\train}(\bw^t);\Theta)}{\partial \Theta}\Big|_{\Theta^t}\nonumber
\end{align}
where $m$ and $n$ is the minibatch size for meta dataset and training dataset respectively. We can understand this update direction as a sum of weighted gradient updates for each training samples.
The term $\frac{\partial \CW(\ell_{\ce}^{i,\train}(\bw^t);\Theta)}{\partial \Theta}$ represents the gradient direction for $\Theta$ on training sample $i$. 
The weight for training sample $i$ is computed using $\CG(\hat{\bw})^{\intercal} \frac{\partial \ell_{\ce}^{i,\train}(\bw)}{\partial \bw}\Big|_{\bw^t}$; thus, the weighting network effectively puts more weights for training samples having a gradient direction similar to the average meta-gradient direction $\CG$. 

For corrupted meta samples, we denote the meta loss function as,
$\CL^{\text{noisy-meta}}(\bw^{\ast}(\Theta))\triangleq\frac{1}{M}\sum_{j=1}^M \ell^{j,\text{noisy-meta}}(\bw^{\ast}(\Theta))$,
where we use $\ell^{j,\text{noisy-meta}}$ to denote the loss on the corrupted $j^{\text{th}}$ meta sample (and  $\ell^{j,\text{meta}}$ to denote the loss for the $j^{th}$ clean meta sample).
Therefore, the key idea is that  if we need to use noisy meta dataset (training dataset can be corrupted arbitrarily and independently), we need to ensure
\begin{align}
\sum_{j=1}^m \frac{\partial \ell^{j,\text{meta}}(\hat{\bw})}{\partial \hat{\bw}}\approx C\sum_{j=1}^m \frac{\partial \ell^{j,\text{noisy-meta}}(\hat{\bw})}{\partial \hat{\bw}}.
\label{eq:same-grad}
\end{align}
$C$ can be any positive constant; we only care about gradient direction in SGD. Note in Eq.~\ref{eq:grad}, only the average meta-gradient ($\CG(\hat{\bw})$) term changes if we use noisy meta samples; gradient for training samples remains the same.

We show that when the outer meta loss function is MAE  (thus $\ell^{j,\text{meta}}$ is $\ell_{\mae}^{j,\text{meta}}$), the noisy (under uniform noise) average meta-gradients remain the same as the average meta gradients on clean meta dataset. 
In \cite{mwnet}, the CE loss function is used  with clean meta samples ($\ell^{j,\text{meta}}$ is $\ell_{\ce}^{j,\text{meta}}$).
Note, the classifier network still uses cross-entropy loss (we still use $\ell_{\ce}^{i,\train}$); we need to maintain average meta-gradient direction for meta samples only. We denote the classifier output (softmax probability vector) $f(\bx_j,\bw)$ as $\mathbf{u}_j\in \Delta^{K-1}$ where $\Delta^{K-1}$ is the $K$-1 dimensional simplex; MAE and CE losses are computed as, $\ell_{MAE}(\by_j,\bu_j)=\sum_k |\bu_{j,k}-\by_{j,k}|$ and $\ell_{CE}(\by_j,\bu_j)=-\sum_k \by_{j,k}\log \bu_{j,k}$.
Note that MAE loss $\ell_{\mae}$ is bounded and has a symmetric property \cite{ghosh2017} as:
\begin{equation}
    \sum_{c=1}^K \ell(c, f(\bx,\bw))= \text{constant, }\forall \bx, \text{ and }\forall \bw.
    \label{eq:sym}
\end{equation}
\begin{theorem}
Let the meta samples are corrupted with uniform noise with a rate of $\eta<1$. Suppose the weighting  network minimize loss function $\ell$ on the meta samples and the loss function $\ell$ satisfies symmetric property in Eq.~\ref{eq:sym}.
Then the expected meta-gradient on corrupted meta samples (for updating the weighting network in Eq.~\ref{eq:grad}) is exactly the same as (up to a proportional constant) the meta-gradient on clean meta samples irrespective of the noise model in the training dataset and the loss function in the classifier network.
\label{thm:main}
\end{theorem}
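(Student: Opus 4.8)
The plan is to reduce everything to a single-sample computation in expectation over the label-noise corruption of the meta set. First I would observe that in the $\Theta$-update of Eq.~\ref{eq:grad}, the meta labels enter \emph{only} through the average meta-gradient $\CG(\hat{\bw})$: the per-training-sample factors $\partial\ell_{\ce}^{i,\train}(\bw)/\partial\bw|_{\bw^t}$ and $\partial\CW(\ell_{\ce}^{i,\train}(\bw^t);\Theta)/\partial\Theta|_{\Theta^t}$ do not involve the meta set at all. So it suffices to compare $\CG$ evaluated on clean versus corrupted meta labels, and this automatically makes the argument agnostic to the training-set noise model and to the classifier loss, which is exactly the ``irrespective of'' clause.

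Next I would fix a meta point $\bx_j$ with unknown true label $c$, hold $\hat{\bw}$ fixed (it is a function of the training minibatch only, not of the current meta labels), and write the expectation of its gradient contribution under uniform noise. With $g_c \triangleq \partial\ell(c,f(\bx_j,\hat{\bw}))/\partial\hat{\bw}$, the observed label is $c$ with probability $(1-\eta)+\eta/K$ and each $c'\neq c$ with probability $\eta/K$, so by linearity of expectation
\begin{align*}
\BE\Big[\tfrac{\partial\ell^{j,\neta}(\hat{\bw})}{\partial\hat{\bw}}\Big] = \Big((1-\eta)+\tfrac{\eta}{K}\Big)g_c + \tfrac{\eta}{K}\sum_{c'\neq c}g_{c'}.
\end{align*}
The crux is then to differentiate the symmetry identity Eq.~\ref{eq:sym} in $\bw$, which gives $\sum_{c=1}^K g_c = 0$, i.e.\ $\sum_{c'\neq c}g_{c'} = -g_c$; substituting collapses the right-hand side to $(1-\eta)\,g_c = (1-\eta)\,\partial\ell^{j,\meta}(\hat{\bw})/\partial\hat{\bw}$. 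Averaging over the $m$ samples in the meta minibatch yields $\BE[\CG^{\neta}(\hat{\bw})] = (1-\eta)\,\CG^{\meta}(\hat{\bw})$, and re-inserting this into Eq.~\ref{eq:grad} shows the expected $\Theta$-update on corrupted meta data equals $(1-\eta)$ times the one on clean meta data; since $\eta<1$ the constant $C=1-\eta$ is strictly positive, so the two directions coincide, which is all that SGD uses.

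I do not anticipate a real obstacle — the computation is short once the symmetry identity has been differentiated. The only point needing care is the conditioning in the expectation: it must be taken over the corruption of the \emph{current} meta minibatch with the training minibatch (hence $\hat{\bw}$) and all prior iterates held fixed, so that each $g_c$ is deterministic and linearity of expectation applies termwise. I would also note, as a remark, that the identity is slightly stronger than stated — it holds in expectation sample-by-sample, not merely for the minibatch average — and that the only property of $\ell_{\mae}$ actually used is Eq.~\ref{eq:sym}; the cross-entropy classifier loss and the training noise never enter, which is precisely why the statement can be phrased ``irrespective of'' them.
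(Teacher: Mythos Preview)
Your proposal is correct and is essentially the paper's own argument: compute the expected per-sample meta-gradient under uniform corruption, then differentiate the symmetry identity Eq.~\ref{eq:sym} so that the noise contribution vanishes and only the factor $(1-\eta)$ survives. The only cosmetic difference is bookkeeping---the paper groups the expectation as $(1-\eta)g_c+\tfrac{\eta}{K}\sum_{c'=1}^K g_{c'}$ and kills the full sum directly, whereas you split off the true class first and then use $\sum_{c'\neq c}g_{c'}=-g_c$; the two are algebraically identical, and your explicit remark that the meta labels enter Eq.~\ref{eq:grad} only through $\CG(\hat{\bw})$ cleanly justifies the ``irrespective of'' clause.
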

\begin{proof}
We can compute the expected gradient for $\bw$ on a batch of samples from the noisy meta dataset under uniform label noise (with the rate $\eta$) as:
 \resizebox{.99\linewidth}{!}{
  \begin{minipage}{\linewidth}
\begin{align*}
&\BE\Big[\sum_{j=1}^m \frac{\partial \ell^{j,\text{noisy-meta}}(\hat{\bw})}{\partial \hat{\bw}}\Big]\\
&\quad=\sum_{j=1}^m\Big[ (1-\eta) \frac{\partial \ell(\by_j, f(\bx_j,\hat{\bw}))}{\partial \hat{\bw}}+ \frac{\eta}{K}\sum_{c=1}^K\frac{\partial \ell(c, f(\bx_j,\hat{\bw}))}{\partial \hat{\bw}} \Big]\\
&\quad= \sum_{j=1}^m\Big[ (1-\eta) \frac{\partial \ell(\by_j, f(\bx_j,\hat{\bw}))}{\partial \hat{\bw}}+
\frac{\eta}{K}\frac{\partial}{\partial \hat{\bw}}\sum_{c=1}^K{ \ell(c, f(\bx_j,\hat{\bw}))} \Big]\\
&\quad= (1-\eta)\sum_{j=1}^m\Big[ \frac{\partial \ell(\by_j, f(\bx_j,\hat{\bw}))}{\partial \hat{\bw}}\Big]= C \sum_{j=1}^m \frac{\partial \ell^{j,\text{meta}}(\hat{\bw})}{\partial \hat{\bw}}
\end{align*}
  \end{minipage}
}
 since $\frac{\partial}{\partial \hat{\bw}}\sum_{c=1}^K{ \ell(c, f(\bx_i,\hat{\bw}))} =0$ when $\ell$ is a symmetric loss (such as MAE). Thus, under uniform label noise on the meta dataset, the expected meta-gradient on the noisy meta samples remains the same as the meta-gradient on the clean meta dataset when the meta loss function is symmetric. 
 \end{proof}
 \vspace{-0.1in}
 Theorem~\ref{thm:main} shows that symmetric losses on the corrupted meta samples for optimizing the weighting network has the same expected meta-gradient direction as the clean samples irrespective of the classifier network loss function and arbitrarily corrupted training datasets. Moreover, we can show convergence of the weighting network with finite mini batch size of the corrupted meta datasets under some mild condition. Similar to \cite{mwnet}, we make the following assumptions.
 \begin{enumerate}[leftmargin=*]
 \vspace{-0.05in}
 \item The meta loss $\ell$ and the classifier network loss $\ell^{\cdot,\text{train}}$ is Lipschitz smooth with constant L and have $\rho$-bounded gradients. 
 \vspace{-0.08in}
 \item The weighting function $\CW(\cdot)$ has bounded gradient and twice differential with bounded Hessian. %
 \vspace{-0.08in}
 \item The classifier network learning rate $\alpha_t$ satisfies $\alpha_t =\min\{1, \frac{k}{T}\}$ for some $k<T,\ k>0$. The learning rate of the weighting network $\beta_t$ satisfies $\beta_t =\min\{\frac{1}{L},\frac{b}{\hat{\sigma}\sqrt{T}}\} $ for some $b>0$ such that $\frac{\hat{\sigma}\sqrt{T}}{b}\geq L$ where $\hat{\sigma}^2$ is the variance of drawing a minibatch (possibly corrupted with  uniform noise).
 \end{enumerate}

\begin{theorem} \label{thm:converge}
Under assumptions 1-3, the weighting network converges for both clean and (uniformly) corrupted meta datasets when the meta loss satisfies symmetric condition:
\begin{align*}
&\min_{0\leq t \leq T} \mathbb{E}[ \|\nabla \mathcal{L}^{\meta}(\Theta^{t})\|_2^2] \leq \mathcal{O}(\frac{ \sigma}{\sqrt{T}}), \text{ and}\\
&\min_{0\leq t \leq T} \mathbb{E}[ \|\nabla \mathcal{L}^{\text{noisy-meta}}(\Theta^{t})\|_2^2] \leq \mathcal{O}\Big({\frac{{ \hat{\sigma}}}{(1-\eta)\sqrt{T}}}\Big),
\end{align*}
where $\sigma^2$ is the variance of drawing uniformly mini-batch sample at random, $\hat{\sigma}^2=\sigma^2+\frac{2\eta\rho^2}{m}$ is the variance adjusted with the uniformly corrupted meta samples,  $\eta$ is the uniform noise rate and $m$ is the minibatch size of the meta dataset\footnote{Derivation is available in supplementary material.}. 
	\end{theorem}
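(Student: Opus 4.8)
The plan is to run the standard descent-lemma argument for SGD on a non-convex but $L$-smooth objective --- exactly as in the convergence proof of MW-Net \cite{mwnet} --- and to track the two changes that a noisy meta set introduces: a bias factor of $(1-\eta)$ in the stochastic update direction, which is precisely what Theorem~\ref{thm:main} supplies, and an inflation of the minibatch-gradient variance from $\sigma^2$ to $\hat\sigma^2=\sigma^2+\frac{2\eta\rho^2}{m}$. The clean-data bound is then literally the MW-Net result, and the noisy-data bound is the same computation with these two modifications propagated through.

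First I would establish that $\Theta\mapsto\CL^{\meta}(\hat{\bw}^t(\Theta))$ is $L$-Lipschitz-smooth with $\rho$-bounded gradient in $\Theta$. This is a chain-rule computation through the one-step unrolled inner update $\hat{\bw}^t(\Theta)=\bw^t-\alpha\frac{1}{n}\sum_i\CW(\ell_\ce^{i,\train}(\bw^t);\Theta)\nabla_\bw\ell_\ce^{i,\train}(\bw)\big|_{\bw^t}$, and it uses exactly Assumptions~1--2: Lipschitz smoothness and bounded gradients of $\ell$ and $\ell^{\cdot,\train}$, together with the bounded gradient and bounded Hessian of $\CW$. With $\alpha_t=\min\{1,k/T\}$ the extra terms produced by differentiating through $\hat{\bw}^t$ are $\mathcal{O}(\alpha_t)$ and do not spoil the smoothness constants. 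This step is essentially identical to \cite{mwnet}, so I would import it rather than re-derive it.

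Next, write the descent inequality for one step of the $\Theta$-update in Eq.~\ref{eq:meta-update}, and let $g_t$ be the minibatch meta-gradient actually used (the expression in Eq.~\ref{eq:grad}). In the clean case, conditioning on the meta minibatch gives $\BE[g_t\mid\Theta^t]=\nabla\CL^{\meta}(\Theta^t)$ and $\BE\|g_t-\nabla\CL^{\meta}(\Theta^t)\|_2^2\le\sigma^2$; plugging into the smoothness bound, using $\beta_t\le 1/L$ to absorb the $\|g_t\|_2^2$ term, telescoping over $t=0,\dots,T$, using that $\CL^{\meta}$ is bounded below, and substituting $\beta=b/(\sigma\sqrt{T})$ yields $\min_t\BE\|\nabla\CL^{\meta}(\Theta^t)\|_2^2\le\mathcal{O}(\sigma/\sqrt{T})$, which is the MW-Net argument verbatim. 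In the noisy case, condition on both the meta minibatch and the uniform label corruption: by Theorem~\ref{thm:main}, $\BE[g_t\mid\Theta^t]=(1-\eta)\,\nabla\CL^{\meta}(\Theta^t)$, so the inner-product term in the descent inequality becomes $-\beta_t(1-\eta)\|\nabla\CL^{\meta}(\Theta^t)\|_2^2$; and a law-of-total-variance split --- minibatch-sampling variance $\sigma^2$ plus the per-sample corruption variance, which is $\mathcal{O}(\eta\rho^2)$ because only an $\eta$-fraction of meta samples have their ($\rho$-bounded) gradient replaced, divided by the minibatch size $m$ --- gives $\BE\|g_t\|_2^2\le (1-\eta)^2\|\nabla\CL^{\meta}(\Theta^t)\|_2^2+\hat\sigma^2$ with $\hat\sigma^2=\sigma^2+\frac{2\eta\rho^2}{m}$. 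Running the same telescoping with $\beta=b/(\hat\sigma\sqrt{T})$, the useful decrease per step is now proportional to $\beta_t(1-\eta)$ while the residual variance term is $\mathcal{O}(\hat\sigma)$; dividing by $(1-\eta)$ produces the stated rate $\mathcal{O}\big(\hat\sigma/((1-\eta)\sqrt{T})\big)$.

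I expect the only place genuine care is needed to be the bookkeeping in the noisy case: recognizing that the noisy minibatch meta-gradient is an unbiased estimate of $(1-\eta)\nabla\CL^{\meta}$ rather than of $\nabla\CL^{\meta}$ itself --- this is exactly what the symmetry of the meta loss buys through Theorem~\ref{thm:main}, and it is the single structural departure from the clean-data proof of \cite{mwnet} --- and then tracking how that $(1-\eta)$ factor survives the telescoping sum and lands in the denominator of the final bound, alongside controlling the corruption-induced variance term using only the $\rho$-bounded-gradient assumption. Everything else (smoothness of the $\Theta$-objective, the role of $\alpha_t$ in taming the inner-loop unrolling, and the standard constant-step-size SGD telescoping) is routine and transfers directly. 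Note that the symmetry property is invoked exactly once, precisely to justify the unbiasedness-up-to-$(1-\eta)$ claim, which is why the argument does not extend to non-uniform (flip or flip2) noise.
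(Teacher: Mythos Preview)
Your proposal is correct and follows essentially the same approach as the paper: import the Lipschitz-smoothness of the $\Theta$-objective from \cite{mwnet} (the paper's Lemma~\ref{lem:lemma1}), establish the $(1-\eta)$-scaled unbiasedness of the noisy meta-gradient via Theorem~\ref{thm:main}, bound the noisy-minibatch variance as $\hat\sigma^2=\sigma^2+2\eta\rho^2/m$ (the paper's Lemma~\ref{lem:lemma2}), and run the standard SGD descent-lemma telescoping with $\beta_t=\min\{1/L,\,b/(\hat\sigma\sqrt{T})\}$. The paper's supplementary carries out exactly this computation, with an explicit two-term decomposition of the per-step loss change---one term for the $\bw$-update, bounded by $\alpha_t\rho^2(1+\alpha_t L/2)$ and killed by $\alpha_t=\min\{1,k/T\}$, and one term for the $\Theta$-update handled via smoothness---which is precisely what you describe as ``the role of $\alpha_t$ in taming the inner-loop unrolling.''
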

\begin{remark}
Using the convergence rate in Theorem~\ref{thm:converge}, we would expect to approximate the meta-gradient under noisy labels to be similar to the gradient under clean labels with a relatively large mini-batch size \cite{robbins1951stochastic}. 
Thus, we can use corrupted meta dataset as long as the meta loss function is symmetric \cite{ghosh2017}. The classifier network can use CE loss; the weighting network only needs to use symmetric loss for computing meta-gradients from the corrupted meta samples.  MAE loss is difficult to optimize for deep networks \cite{generalized-ce}; however, we note that the weighting network is simple with a single input (the loss value) and a single layer with 100 hidden nodes as used in \cite{mwnet}. We find that the weighting network with MAE loss for the meta samples is amiable to SGD; we also find MAE loss to be better than cross-entropy for the weighting network under varying noise rates and noise models.  Although, MAE loss can provide a guarantee for the meta dataset corrupted with uniform label noise; the training datasets do not require any such condition; we can potentially handle training datasets with \emph{instance-dependent label noise} also. We experimentally found that MAE loss significantly outperforms CE loss in the weighting network for uniform label noise. Nevertheless, we also observe that CE loss in the weighting network performs relatively well for noisy meta datasets.  We believe as the weighting network is a shallow  network with single input and output and equipped with $l_2$ regularizer (in the form of weight decay), $\frac{\eta}{K} \BE[\frac{\partial \sum_{c=1}^k{ \ell_{\ce}(c, f(\bx,\hat{\bw}))}}{\partial \bw}]$ is relatively small compared to $(1-\eta)\BE[ \frac{\partial \ell_{\ce}(\by, f(\bx,\hat{\bw}))}{\partial \hat{\bw}} ]$ even for CE loss.
\end{remark}

Under the flip noise, we randomly corrupt samples from a single class to a single different random class with a probability of $\eta$. Under the flip$2$ noise, we corrupt uniformly between random two fixed classes. Under a flip$\{K-1\}$ noise model, labels are corrupted to $K-1$ other classes, effectively becoming uniform noise. We consider the flip noise first; we denote the random corrupted class labels as $\CT(\by),\ \CT(\by)\neq \by$  where  samples with true labels $\by$ are corrupted as $\CT(\by)$  with probability $\eta$. However, once $\CT(\by)$ is drawn randomly as part of the noise model, it is deterministic. Hence, the expectation of gradient on  noisy meta samples do not remain the same as the gradient on the clean meta samples for symmetric losses:
\begin{equation*}
\resizebox{0.48\textwidth}{!}{$
\frac{\partial \CL^{\text{noisy-meta}}}{\partial \hat{\bw}} \!\!=\!\!\sum\limits_{j=1}^m\!\!\Big[ \!(\!1\!\!-\!\!\eta\!)\!\frac{\partial \ell(\!\by_j,f(\bx_j,\hat{\bw})\!)}{\partial \hat{\bw}}\!\!+\!\!\eta \!\frac{\partial \ell(\!\CT(\by_j), f(\bx_j,\hat{\bw})\!)}{\partial \hat{\bw}} \!\Big].
$}
\end{equation*}
Hence the theoretical guarantees in Eq. \ref{eq:same-grad} only holds for uniform noise model. 
Nevertheless, taking expectation w.r.t. to $\CT(\by)$, we get the same gradient as that of clean ones  since $\BE_{\CT(\by)} \Big[\frac{\partial \CL^{\text{noisy-meta}}}{\partial \hat{\bw}} \Big] =C \sum_{j=1}^m \frac{\partial \ell^{j,\text{meta}} (\hat{\bw})}{\partial \hat{\bw}}$. 
Thus, as we flip to more classes, we would expect more uniform label noise behavior resulting in the theoretical robustness of the weighting network using robust losses. However,  we found that when flip noise is relatively small, we can still use MW-Net with noisy meta samples. Moreover, we experimentally observe no significant gains for using clean meta samples even for flip noise (where labels are corrupted to a single other class). We extensively experiment the robustness of the MW-Net under the flip and flip2 noise model in Section~\ref{sec:exp}.

\begin{table*}[t]\centering
    \scalebox{0.95}{
    \begin{tabular}{c ccc| ccc}\toprule
    
 \multirow{ 3}{*}{Model}    &  \multicolumn{3}{c|}{Uniform Noise Rate} &  \multicolumn{3}{c}{Flip2 Noise Rate} \\
    \cline{2-7}
             & $0\%$ & $40\%$ & $60\%$   & $0\%$ & $20\%$ & $40\%$  \\
    \cline{2-7}        
            &   \multicolumn{6}{c}{Dataset\quad CIFAR-10}\\
    \midrule
    \toprule
            L2RW$^{\ast}$ &$92.38\pm 0.10$ & $86.92\pm 0.19$ & $82.24\pm 0.36$ & $89.25\pm 0.37$ & $87.86\pm 0.36$ & $85.66\pm 0.51$ \\
            GLC$^{\ast}$&$94.30\pm 0.19$ & $88.28\pm 0.03$ & $83.49\pm 0.24$ & $91.02\pm 0.20$ & $89.68\pm 0.33$ & $\bf{88.92\pm 0.24}$\\
            MW-Net$^{\ast}$ & \textbf{95.15$ \pm $0.13} & \textbf{90.35$ \pm $0.21} & \textbf{86.52$ \pm $0.09} & \textbf{92.35$ \pm $0.29} & \bf{90.28$ \pm $0.27} & 87.04$ \pm $0.89\\
            
            \toprule
            BaseModel & $95.60\pm 0.22$ & $68.07 \pm 1.23$ & $55.12 \pm 3.03$ & $92.89\pm 0.32$ & $76.83\pm 2.30$ & $70.77\pm 2.31$ \\
            Reed-Hard & $94.38 \pm 0.14 $ & $81.26\pm 0.51 $ & $73.53 \pm 1.54 $ & $92.31\pm 0.25$ & $88.28\pm 0.36$ & $81.06\pm 0.76$ \\
            S-Model & $83.79\pm 0.11$ & $79.58 \pm 0.33$ & $-$ & $83.61\pm 0.13$ & $79.25\pm 0.30$ & $75.73\pm 0.32$ \\
            Self-paced & $90.81 \pm 0.34$ & $86.41\pm 0.29$ & $53.10\pm 1.78$ & $88.52\pm 0.21$ & $87.03\pm 0.34$ & $81.63\pm 0.63$ \\
            Focal Loss & $95.70\pm 0.15$ & $75.96\pm 1.31$ & $51.87\pm 1.19$ & $93.03\pm 0.16$ & $86.45\pm 0.19$ & $80.45\pm 0.97$ \\
            Co-teaching &$88.67\pm 0.25$ & $74.81\pm 0.34$ & $ 73.06 \pm 0.25$ & $89.87\pm 0.10$ & $82.83\pm 0.85$ & $75.41\pm 0.21$\\
            D2L& $94.64\pm0.33$ & $85.60\pm 0.13$ & $68.02 \pm 0.41$ & $92.02\pm 0.14$ & $87.66\pm 0.40$ & $83.89\pm 0.46$  \\
            Fine-tuning &$\bf{95.65\pm 0.15 }$ & $80.47\pm 0.27 $ & $78.75\pm 2.40$ & $\bf{93.23\pm 0.23}$ & $82.47\pm 3.64$ & $74.07\pm 1.56$ \\
            MentorNet&$94.35\pm 0.42$ & $ 87.33\pm 0.22$ & $82.80\pm 1.35$ & $92.13\pm 0.30$ & $86.36\pm 0.31$ & $81.76\pm 0.28$\\
            MNW-Net & 94.98$ \pm $0.1& 88.8$ \pm $0.44 & 84.26$ \pm $0.18 & 92.45$ \pm $0.11 & \textbf{90.67$ \pm $0.18} & \textbf{87.92$ \pm $0.44}\\
            RMNW-Net &95.23$ \pm $0.13 & \bf{90.8$ \pm $0.23}& \bf{86.31$ \pm $0.28}& {92.66$ \pm $0.13} & 89.78$ \pm $0.24 &85.38$ \pm $0.37 \\
            \midrule
            \toprule
            &  \multicolumn{6}{c}{Dataset\quad CIFAR-100}\\ \cline{1-7}
            L2RW$^{\ast}$&$72.99\pm 0.58$ & $60.79\pm 0.91$ & $48.15\pm 0.34$ & $64.11\pm 1.09$ & $57.47\pm 1.16$ & $50.98\pm 1.55$\\
            GLC$^{\ast}$&$73.75\pm 0.51$ & $61.31\pm 0.22$ & $50.81\pm 1.00$ & $65.42\pm 0.23$ & $63.07\pm 0.53$ & $\bf{62.22\pm 0.62}$\\
            MW-Net$^{\ast}$ & \bf{78.06$ \pm $0.14} & \bf{70.39$ \pm $0.16} & \bf{63.29$ \pm $0.23} & \bf{68.92$ \pm $0.35} & \bf{64.01$ \pm $0.37} & 57.38$ \pm $0.48\\
            \midrule
            BaseModel &$79.95\pm 1.26$ & $51.11\pm 0.42$ & $30.92\pm 0.33$ & $70.50\pm 0.12 $ & $50.86\pm 0.27$ & $43.01\pm 1.16$\\
            Reed-Hard&$64.45\pm 1.02 $ & $51.27\pm 1.18$ & $26.95\pm 0.98$ & $69.02\pm 0.32$ & $60.27\pm 0.76$ & $50.40\pm 1.01$\\
            S-Model &$52.86\pm 0.99$ & $42.12\pm 0.99$ & $-$ &$51.46\pm 0.20$ & $45.45\pm 0.25$ & $43.81\pm 0.15$\\
            Self-paced &$59.79\pm 0.46$ & $46.31\pm 2.45$ & $19.08\pm 0.57$ & $67.55\pm0.27$ & $63.63\pm 0.30$ & $53.51\pm 0.53$\\
            Focal Loss &$81.04\pm 0.24$ & $ 51.19\pm 0.46$ & $27.70\pm 3.77$ & $70.02\pm 0.53 $ & $61.87\pm 0.30$ & $54.13\pm 0.40$\\
            Co-teaching& $61.80\pm 0.25$ & $46.20\pm 0.15$ & $35.67\pm 1.25$ & $63.31\pm 0.05$ & $54.13\pm 0.55$ & $44.85\pm 0.81$\\
            D2L&$66.17\pm 1.42$ & $52.10\pm 0.97$ & $41.11\pm 0.30$ & $68.11\pm 0.26$ & $63.48\pm 0.53$ & $51.83\pm 0.33$\\
            Fine-tuning & $\bf{80.88\pm 0.21}$ & $52.49\pm 0.74$ & $38.16\pm 0.38$ & $\bf{70.72\pm 0.22}$ & $56.98\pm 0.50$ & $46.37\pm 0.25$\\
            MentorNet&$73.26\pm 1.23$ & $61.39\pm 3.99$ & $36.87\pm 1.47$ & $70.24\pm 0.21$ & $61.97\pm 0.47$ & $52.66\pm 0.56$\\
            MNW-Net & 77.97$ \pm $0.12 &62.95$ \pm $0.38& 55.75$ \pm $0.56 & 68.83$ \pm $0.31 & 63.69$ \pm $0.48 & 56.46$ \pm $0.28\\
            RMNW-Net & 78.27$ \pm $0.17 & $\bf{70.76 \pm 0.20}$ & \textbf{63.79$ \pm $0.39} & 69.69$ \pm $0.21 & \textbf{65.46$ \pm $0.51} &\textbf{57.41$ \pm $0.51}  \\
      \bottomrule
        \end{tabular}
    }%
    \vspace{0.03cm}
    \caption{Classification accuracy on the clean test set of CIFAR-10/CIFAR-100 dataset under uniform noise and flip2 noise (using WRN-28-10 architecture for uniform noise and ResNet-32 for flip2 noise \cite{mwnet}). L2RW$^{\ast}$,  GLC$^{\ast}$, and MW-Net$^{\ast}$ use 1000 clean meta samples; all other models use 1000 corrupted meta samples. Best models from each group are \bf{bold}.
    }
    \label{tab:main}
\end{table*}

\begin{table*}[t]\centering
    \scalebox{0.9}{
    \begin{tabular}{c ccc| ccc}\toprule
    \multirow{ 3}{*}{Model}  &   \multicolumn{3}{c|}{CIFAR-10}  &  \multicolumn{3}{c}{CIFAR-100} \\
    \cline{2-7}
    & \multicolumn{6}{c}{Flip Noise Rate}\\
     & $0\%$ & $15\%$ & $30\%$   & $0\%$ & $15\%$ & $30\%$  \\\toprule
            MWNet$^{\ast}$ & \textbf{95.15 $\pm$ 0.13} & \textbf{93.73$ \pm $0.2} & \textbf{91.01$ \pm $0.31} & \textbf{78.06$\pm$ 0.14}& \textbf{73.98$ \pm $0.32}& \textbf{66.84$ \pm $0.25}\\\midrule
            MNW-Net & 94.98 $\pm$ 0.1 & \textbf{93.75$ \pm $0.15}& \textbf{92.01$ \pm $0.23} & 77.97 $\pm$ 0.12 & 73.79$ \pm $0.26 &66.38$ \pm $0.51\\
            RMNW-Net & \textbf{95.23 $\pm$0.13} &93.17$ \pm $0.12 & 88.69$ \pm $0.53 & \textbf{78.27$ \pm $0.17} &\textbf{74.16$ \pm $0.25} & \textbf{66.72$\pm$0.37} \\
      \bottomrule
        \end{tabular}
    }
    \vspace{0.03cm}
        \caption{Classification accuracy on clean test set of CIFAR-10/CIFAR-100 dataset under flip noise (using WRN-28-10 architecture). Best models from each group are \bf{bold}.}
    \label{tab:flip}
\end{table*}

\begin{table*}[t]\centering
    \scalebox{0.85}{
    \begin{tabular}{c cc|cc|cc||cc|cc|cc}\toprule
    \multirow{ 3}{*}{Model/ Noise Rate}  & \multicolumn{6}{c||}{CIFAR-10} & \multicolumn{6}{c}{CIFAR-100} \\
    \cline{2-13}
    &  \multicolumn{2}{c|}{Uniform} & \multicolumn{2}{c|}{Flip2} & \multicolumn{2}{c||}{Flip} & \multicolumn{2}{c}{Uniform} & \multicolumn{2}{c|}{Flip2} & \multicolumn{2}{c}{Flip} \\
      & 40\% & 60\% & 20\% & 40\% & 15\% & 30\%    & 40\% & 60\% & 20\% & 40\% & 15\% & 30\%  \\\toprule
            MW-Net$^{\ast}$ & \textit{0.9714} & \textit{0.9631} & \textit{0.9487} & \textit{0.9647} & \textbf{0.9857} & \textbf{0.9721} & \textit{0.9644} & \textit{0.9581} & \textit{0.8561} & \textit{0.5962} & \textit{0.7948} & \textit{0.5145}\\
            MNW-Net & 0.9346 & 0.8874 & 0.2531 & 0.5137 & 0.2692 & 0.4901 & 0.9587 & 0.9390 & 0.4946 & 0.5647 & 0.3407 & 0.4698\\
            RMNW-Net & \textbf{0.9848} & \textbf{0.9879} & \textbf{0.9692} & \textbf{0.9694} & \textit{0.9807} & \textit{0.9614} & \textbf{0.9749} & \textbf{0.9800} & \textbf{0.9445} & \textbf{0.9001} & \textbf{0.9322} & \textbf{0.7469}    \\
      \bottomrule
        \end{tabular}
    }
    \vspace{0.03cm}
    \caption{AUC for detecting noisy training samples. Best models are {\bf bold}, second best models are {\it italic}.}
    \label{tab:auc}
    \vspace{-0.2in}
\end{table*}

\section{Experiments}
\label{sec:exp}
We perform a series of experiments to evaluate the robustness of the weighting network under noisy meta samples and compare our approach with competing methods.
We follow the experimental setup in for fair comparison  \cite{mwnet}. 

\textbf{Dataset}
We use two benchmark datasets CIFAR-10 and CIFAR-100 for comparing robust learning methods. We use 1000 meta samples for the weighting network. However, contrary to \cite{mwnet}, we do not require clean samples for the meta dataset. Thus, we also experiment with corrupted meta samples. In our results, we differentiate between models using clean (denoted with {\it asterisk}$^{\ast}$) and noisy meta samples.

\textbf{Noise Rate} We apply the uniform noise model with rates $0$, $0.4$, and $0.6$ and the flip2 noise model with rates $0$, $0.2$, $0.4$. Furthermore, we also compare against conditions under heavily corrupted training samples with a $0.7$ uniform label noise rate and a $0.5$ flip2 label noise rate. 
Moreover, we also apply flip noise with a rate of $0.15$ and $0.30$ and compare with MW-Net\footnote{The flip2 noise model is denoted as flip noise in \cite{mwnet}}. Note that the flip noise  is more challenging than flip2 and uniform noise model; flip/flip2/uniform noise with rate $\eta\geq0.50 / 0.67 / 1.0$  would result in noisy class to be the majority making learning infeasible.

\textbf{Baseline methods}
Our analysis shows the weighting network optimized with MAE loss on corrupted meta samples has the same expected gradient direction as of clean meta samples.  We denote our model, utilizing corrupted samples and using a robust MAE loss for the weighting network, as Robust-Meta-Noisy-Weight-Network (referred to as RMNW-Net).
We denote the MW-Net model utilizing corrupted meta samples as Meta-Noisy-Weight-Network (referred to as MNW-Net); thus, the MNW-Net model trains the weighting network on the noisy meta dataset using cross-entropy loss as the meta loss function.  
We distinguish models using noisy meta samples and clean meta samples differently. L2RW \cite{l2rw}, GLC \cite{glc}, MW-Net uses clean meta samples and they are denoted as L2RW$^{\ast}$, GLC$^{\ast}$, and MW-Net$^{\ast}$ respectively. Other competing methods include BaseModel, trained on corrupted training data, and several robust learning methods such as Reed \cite{reed}, S-Model \cite{s-model}, SPL \cite{spl},  Focal Loss \cite{focal}, Co-teaching \cite{co-teaching}, D2L \cite{d2l}, MentorNet \cite{mentornet}. We train MW-Net, MNW-Net, RMNW-Net from scratch for all the datasets and the noise models; for the baseline models, we reuse results from \cite{mwnet}. We obtained slightly better performance for MW-Net and its variants with a smaller learning rate than reported in \cite{mwnet}.

\textbf{Network architecture and optimization}
For uniform noise, we use Wide ResNet-28-10 (WRN-28-10) \cite{wide}, and for flip2 noise, we use ResNet-32 \cite{resnet}, following \cite{mwnet}. 
For the flip noise model, we use WRN-28-10 architecture \footnote{We also experiment with flip2 noise with WRN-28-10 to show indifference to architecture for MW-Net model and its variants which we detail in supplementary material.}. The weighting network is a single layer neural network with 100 hidden nodes and ReLU activations. 
We train for MW-Net and its variant for $120$ epochs using SGD with momentum $0.9$, weight-decay $5\times 10^{-4}$, and an initial learning rate of $0.05$.
We use an optimizer scheduler to divide the learning rate by 10 after 36 and 38 epoch (for uniform, flip, and flip2 noise with WRN-28-10) and after 40 and 50 epoch (for flip2 noise with ResNet-32), following \cite{mwnet}. The learning rate for the weighting network is fixed to $10^{-3}$ with weight decay $5e-4$. We do not perform hyper-parameter optimization following \cite{mwnet}. We use a batch size of 100 for both the training samples and the meta samples.  We repeat experiments with five different seeds for corrupting samples with label noise and initializing the classifier networks. 

\subsection{Results}
Table~\ref{tab:main} lists the average accuracy and standard deviation (std) over five runs for all models on CIFAR-10/100 dataset corrupted with uniform and flip2 noise model. Under $40\%$ and $60\%$ uniform noise, performance of MNW-Net drops by $6\%$ and $12\%$ on CIFAR-10 dataset and drops by $20\%$ and $30\%$ on CIFAR-100 dataset compared to the no noise case. Other baseline models using corrupted meta samples performs worse than MNW-Net. However, using the clean meta samples, performance of MW-Net$^{\ast}$ only drops by $4\%$ and $9\%$ on CIFAR-10 dataset and drops by $10\%$ and $20\%$ on CIFAR-100 dataset. Only our proposed approach RMNW-Net, even without access to clean meta samples, retains performance similar to MW-Net$^{\ast}$ on both dataset.

Under flip2 noise, we observe MNW-Net performs slightly better than RMNW-Net on CIFAR-10 dataset while RMNW-Net performs better on CIFAR-100 dataset. In CIFAR-10/CIFAR-100 dataset, MNW-Net/RMNW-Net performs better than all methods using corrupted meta samples. Moreover, we also note that both MNW-Net and RMNW-Net performs similar to MW-Net without access to the clean meta samples for the flip2 noise model. This observation suggests that even for flip2 noise, we may not need access to clean meta samples. Under clean datasets ($0\%$ noise rate), we observe that RMNW-Net performs marginally better than MW-Net$^{\ast}$ for both WRN-28-10 architecture and ResNet-32 architectures. Improved performance of RMNW-Net compared to MW-Net$^{\ast}$ on clean datasets may suggest MAE loss is suitable for the weighting network for achieving better generalization ability; we leave such studies for future works.

Table \ref{tab:flip} lists the average accuracy and std for MW-Net$^{\ast}$, MNW-Net, and RMNW-Net for the flip noise model. Similar to flip2 noise, we observe that under flip noise, MNW-Net performs better on CIFAR-10 dataset whereas RMNW-Net performs better on CIFAR-100 dataset. Moreover, we do not observe any significant drop in performance when using corrupted meta samples compared to MW-Net$^{\ast}$. Surprisingly, in some cases ($30\%$ flip noise on CIFAR-10 dataset), we observe, MNW-Net performs better than MW-Net$^{\ast}$; a slightly larger std number might be a reason behind this.

\textbf{Robustness under heavy label noise}
Figure~\ref{fig:heavy-noise} shows performances of MW-Net$^{\ast}$, MNW-Net, and RMNW-Net under heavy uniform ($\eta =0.7$) and flip2 ($\eta=0.5$) noise.  We observe that for $70\%$ uniform noise, RMNW-Net performs better than MNW-Net by $\sim15\%$ and $\sim 1.5\%$ on CIFAR-10 and CIFAR-100 dataset respectively.  Under flip2 noise, MNW-Net performs better on CIFAR-10 dataset and RMNW-Net performs better on CIFAR-100 dataset. These observations along with our analytical results suggest RMNW-Net performs the best on when meta samples are corrupted with uniform noise and performs on par with MW-Net$^{\ast}$ in almost all cases. 

\textbf{Corrupted sample detection}
We also study whether the weighting network can separate the noisy training samples from the clean training samples. We take the output of the weighting network as the probability of being clean samples and compute the AUC metric on the corrupted training datasets. Table~\ref{tab:auc} lists the AUC numbers on predicting the clean samples for MW-Net$^{\ast}$, MNW-Net, and RMNW-Net for different noise models and noise rates.  We run for 120 epochs and test the performances on corrupted training datasets while training; in Table~\ref{tab:auc}, we list the best AUC numbers obtained while training the weighting network and the classifier network. MW-Net$^{\ast}$ performs better than MNW-Net as expected due to clean meta samples. However, surprisingly, RMNW-Net, with corrupted meta samples, performs better than MW-Net$^{\ast}$ with clean meta samples. 
We also observe MW-Net$^{\ast}$ performs well (in terms of AUC numbers) under uniform noise as also observed in \cite{mwnet}; however, the performance deteriorates significantly for flip2 and flip noise model on CIFAR-100 dataset. Nevertheless, RMNW-Net retains strong performance for all cases. Surprisingly, although RMNW-Net has better performances in terms of AUC numbers, on the primary classification task, we observe MW-Net$^{\ast}$, MNW-Net, and RMNW-Net to perform similarly.
Further investigations reveal that for RMNW-Net, the importance weights for most samples are in a tighter range whereas, for MW-Net$^{\ast}$, the importance weights for most samples spread out over a larger range. A smaller gap in RMNW-Net might be the reason for not having significantly improved classification performance. We note that having a high AUC number on the corrupted sample detection task is not a prerequisite for having the best performance on the primary classification task. However, a high discrimination ability to separate the corrupted samples is indeed important for devising a more powerful robust model.
We suspect that RMNW-Net can filter out corrupt samples from the noisy training datasets using a small number of corrupted meta samples with high accuracy, though we leave
confirmation to future work.

	\begin{figure}[pt] \vspace{2mm}
		\begin{minipage}{0.49\textwidth}
			\centering
			\includegraphics[width=0.49\textwidth]{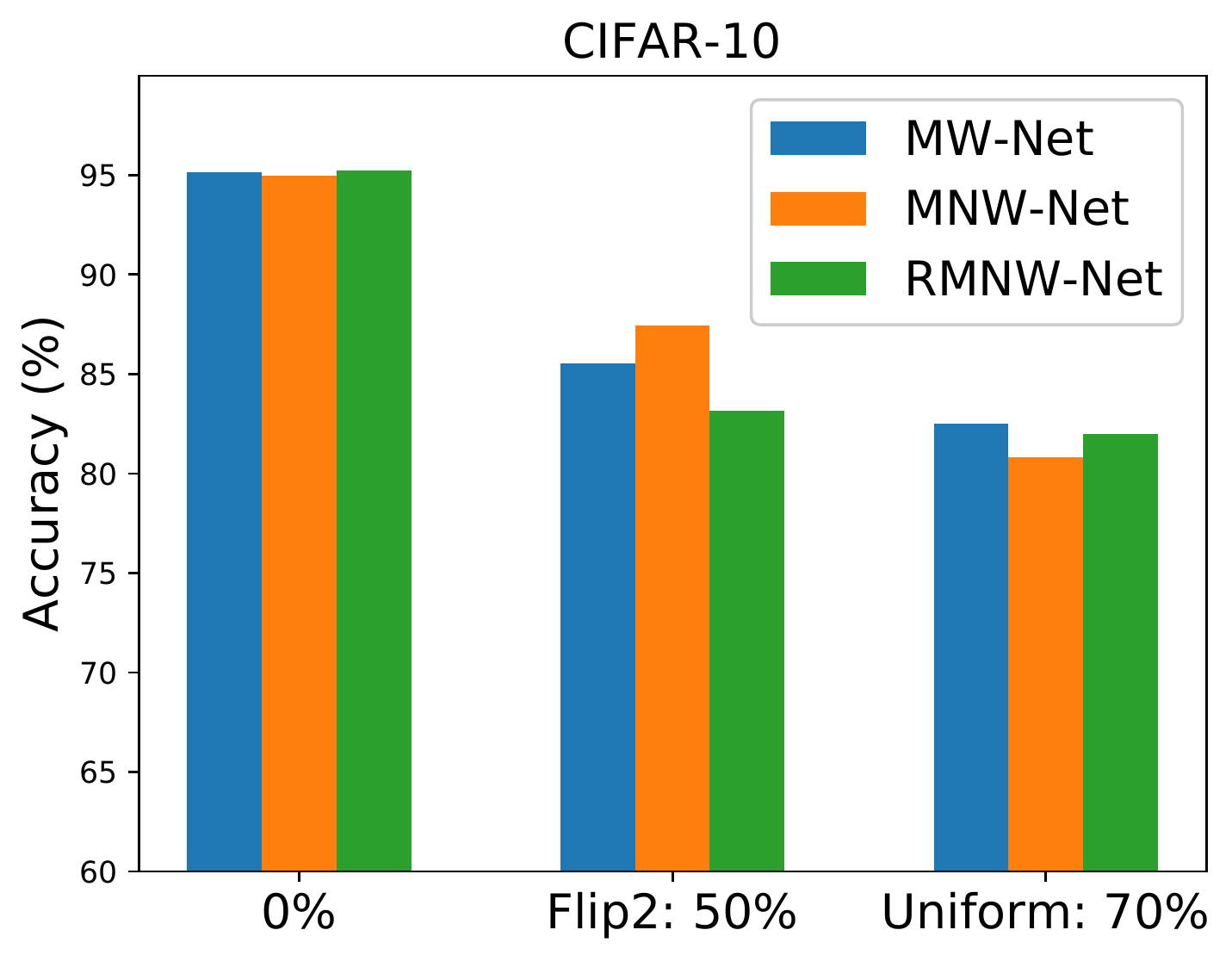}
			\includegraphics[width=0.49\textwidth]{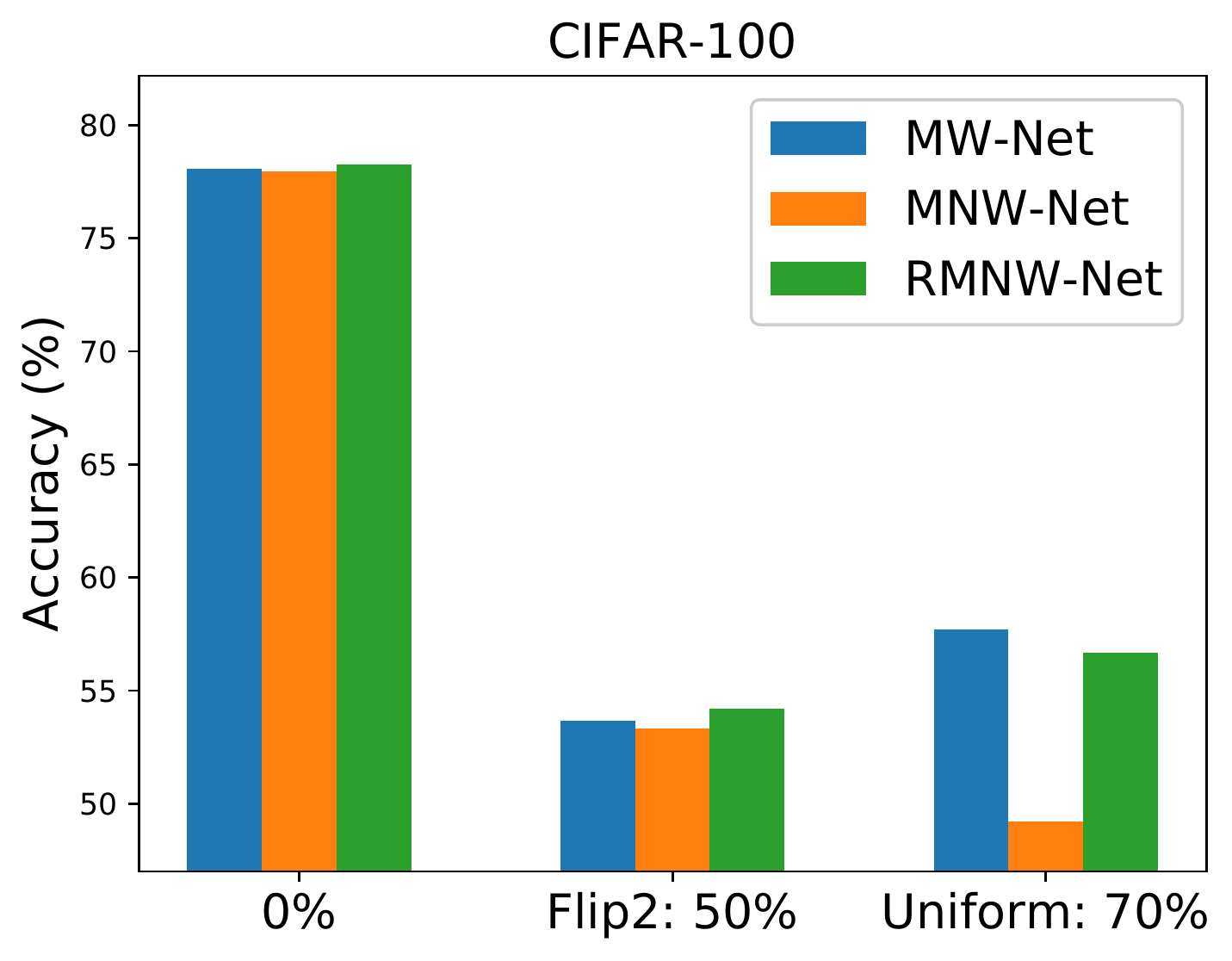} \vspace{0mm}
			\caption{Performance comparison for different models under heavy uniform and flip2 noise (using WRN-28-10 architecture)} \label{fig:heavy-noise}
		\end{minipage}\vspace{-0.2in}
	\end{figure}

\vspace{-0.25cm}
\section{Related Works}
Machine learning models are highly vulnerable to label noise in the training datasets; label noise becomes more important for the DNN classifier with enormous memorization capability \cite{closer-memory}.  Commonly used convex potential losses \cite{bartlett} are not tolerant to label noise; theoretically, noise robustness of convex loss potentials have been analyzed in \cite{servedio}  and experimentally studied in \cite{nettleton}. A class of common approaches looks for models that are inherently tolerant to label noise; thus training with noisy labels should not affect the predictive performance on unbiased test datasets \cite{unhinged,ghosh2015, ghosh2017-dt}. For asymmetric label noise, an unbiased estimator can be constructed if the noise rates are known \cite{natarajan}. 
Computing the noise rates from corrupted datasets has been also studied under some strict assumptions \cite{scott}.
For binary classifiers, a sufficient condition has been derived for a loss functions to be tolerant to label noise in \cite{ghosh2015}; ramp loss, probit loss, unhinged loss satisfies this condition \cite{ghosh2015,unhinged,calibrated-surrogates}. For a multi-class DNN classifier, MAE loss is proved to be tolerant to label noise \cite{ghosh2017}. A generalized cross-entropy loss has been proposed that offers noise robustness with easier optimization procedure \cite{generalized-ce}. Along the line of robust MAE loss \cite{ghosh2017}, several new loss functions have been recently proposed \cite{normalized-loss,peer,ldmi, symmetric-ce,symmetric-labels,loss-factor,sound-noise,ce-ijcai}. Several methods have also focused on adding a noise transition matrix layer before computing the softmax probability distributions \cite{s-model,forward,glc,probabilistic-correction,xiao2015learning,safe-noise}. Among these, several methods additionally use a subset of clean gold samples for computing the transition matrix with some guarantees \cite{glc,forward,xiao2015learning}.
The core idea of the noise transition layer is to perform Expectation-Minimization (EM) algorithm to iteratively predict the noisy samples and learn the classifier networks  \cite{fisher-estimating}.

Another prominent approach for learning with label noise is to determine the clean samples in the corrupted training datasets. EM algorithms for label noise also fall under these settings where the expectation steps learn the probable clean samples.  The idea of choosing probable clean samples goes farther back; many heuristics have been proposed to determine the clean samples in the training procedure and remove the effect of those samples in earlier research \cite{identifying-mislabeled,identifying}. Recently, there has been a resurgence of methods using ideas along these lines \cite{s-model,Mandal_2020_WACV,co-teaching,joint,webly,co-teaching,mentornet,joint,veit2017learning,yuan2018iterative,focal,reed,masking}. 
A similar concept is introduced in curriculum learning where model learns from easy samples first, and then adapts to harder examples \cite{bengio2009curriculum,self-paced,webly,learning-distill}.
In Self-Paced learning (SPL), the model first optimizes easy samples (probably clean) samples \cite{spl, spl-div}.
In Mentornet \cite{mentornet}, the model consists of a student-teacher network where the teacher finds clean samples for the students. 
Co-teaching is another method with two neural networks where one network uses samples predicted as clean (with some threshold) using the other network \cite{co-teaching}.
Along this line, many recent methods for robust DNN learning use an ensemble (two or more networks) approach to mitigate the effect of label noise \cite{mentornet,yuan2018iterative,co-teaching, disagreement,active-bias}.
Although one network can be susceptible to noise, two (or more) networks can be possibly used to remove noisy samples. Some methods also augment an unbiased clean dataset; in \cite{yuan2018iterative}, authors use a different subset of the noisy dataset for multiple networks with the help of clean datasets whereas
in \cite{veit2017learning}, authors use some number of clean samples to learn a separate clean sample prediction networks.  Susceptibility of DNN under label noise is also due to memorization and overfitting capabilities. Based on the overfitting behavior, in \cite{d2l}, the authors proposed a dimensionality driven measure to determine the epochs where overfitting starts to occur. Along a similar line to \cite{d2l}, in \cite{o2u}, the learning rate is varied between overfitting and underfitting in the training procedure. However, while these methods perform well in practice, they do not guarantee robustness and often require hyper-parameter tuning and closely watching training losses.

Methods, identifying clean samples and retraining the classifiers, effectively reduce the importance weights for probable noisy samples. Thus, many proposed approaches learn the importance weights with or without the help of additional clean samples \cite{l2rw,focal,l2l-noise,mwnet,cleannet,hu2019noise,iterative-openset}. The core idea is to assign the importance weights based on loss values; for example, in Focal loss, the model gives more weight to high loss samples \cite{focal}. 
In \cite{reed}, the authors propose a regularized method to use samples that have labels similar to the model predictions; thus, the model implicitly gives low weights to high loss samples. Many recent methods  propose to learn the importance weight function by non-linear functions \cite{l2rw,mwnet}. 
Meta-learning methods are suitable for learning the weighting network to learn the classifier network \cite{maml,reptile,l2l-g2g}. The meta-learning framework learns a network that can adapt to many tasks with very few samples \cite{prototypical,reptile,ravi-few-shot}; model agnostic meta-learning, a powerful approach for meta-learning, optimizes an optimal initializer for many tasks \cite{maml}. 
In \cite{l2rw}, the authors use meta-learning methods L2R for learning to weight samples; similar to MW-Net, the model learns a weighting function using clean meta (validation) datasets. 
Along this line, several proposed methods  \cite{meta-noise-training,mwnet,l2l-noise,metacleaner,nlnl,combinatorial} use meta-learning frameworks.

\vspace{-0.25cm}
\section{Discussions and Future Works}
An optimal sample weighting strategy is of utmost importance for learning robust deep neural networks; learning a weighting strategy alleviates the need for heuristics to assign weights to corrupted samples in the training datasets. However, learning the weighting network comes at a cost; previous research uses gold standard clean datasets for the weighting network. In this paper, we close the gap for learning the weighting network without access to clean meta samples. Theoretical results and extensive experimentation suggest that using robust loss functions in the weighting network can perform similar to the case with clean meta samples. Although MAE loss is difficult to optimize for deep networks, optimizing the weighting  (shallow) network with MAE loss is easy and effective. Albeit simple, we show that it is possible to learn robust classifiers by importance weighting without access to clean meta samples. 

Interestingly, we observe the weight network with robust losses has much more discernibility than the cross-entropy loss in separating the corrupted samples from the training datasets. Identifying the clean samples (instead of assigning importance weights) using the meta-network with robust losses is an intriguing avenue for future work. Although our proposed approach performs similar to MW-Net even without access to the clean meta samples for all the noise models, our theoretical results only hold for the uniform noise model on the meta dataset. Thus, the degree of robustness for MW-Net and RMNW-Net under the flip and flip2 noise model remains an open problem to tackle.

{\small
\bibliographystyle{ieee_fullname}
\bibliography{egbib}

\begin{thebibliography}{10}\itemsep=-1pt

\bibitem{survey-new}
G{\"o}rkem Algan and Ilkay Ulusoy.
\newblock Image classification with deep learning in the presence of noisy
  labels: A survey.
\newblock {\em arXiv preprint arXiv:1912.05170}, 2019.

\bibitem{l2l-g2g}
Marcin Andrychowicz, Misha Denil, Sergio Gomez, Matthew~W Hoffman, David Pfau,
  Tom Schaul, Brendan Shillingford, and Nando De~Freitas.
\newblock Learning to learn by gradient descent by gradient descent.
\newblock In {\em NeurIPS}, pages 3981--3989, 2016.

\bibitem{closer-memory}
Devansh Arpit, Stanis\l{}aw Jastrzundefinedbski, Nicolas Ballas, David Krueger,
  Emmanuel Bengio, Maxinder~S. Kanwal, Tegan Maharaj, Asja Fischer, Aaron
  Courville, Yoshua Bengio, and Simon Lacoste-Julien.
\newblock A closer look at memorization in deep networks.
\newblock In {\em ICML}, page 233–242, 2017.

\bibitem{calibrated-surrogates}
Han Bao, Clayton Scott, and Masashi Sugiyama.
\newblock Calibrated surrogate losses for adversarially robust classification.
\newblock In {\em COLT}, pages 408--451, 2020.

\bibitem{bartlett}
Peter~L Bartlett, Michael~I Jordan, and Jon~D McAuliffe.
\newblock Convexity, classification, and risk bounds.
\newblock {\em Journal of the American Statistical Association},
  101(473):138--156, 2006.

\bibitem{bengio2009curriculum}
Yoshua Bengio, J{\'e}r{\^o}me Louradour, Ronan Collobert, and Jason Weston.
\newblock Curriculum learning.
\newblock In {\em ICML}, pages 41--48, 2009.

\bibitem{identifying}
Carla~E Brodley and Mark~A Friedl.
\newblock Identifying mislabeled training data.
\newblock {\em Journal of artificial intelligence research}, 11:131--167, 1999.

\bibitem{identifying-mislabeled}
Carla~E Brodley, Mark~A Friedl, et~al.
\newblock Identifying and eliminating mislabeled training instances.
\newblock In {\em AAAI}, pages 799--805, 1996.

\bibitem{active-bias}
Haw-Shiuan Chang, Erik Learned-Miller, and Andrew McCallum.
\newblock Active bias: Training more accurate neural networks by emphasizing
  high variance samples.
\newblock In {\em NeurIPS}, pages 1002--1012, 2017.

\bibitem{symmetric-labels}
Nontawat Charoenphakdee, Jongyeong Lee, and Masashi Sugiyama.
\newblock On symmetric losses for learning from corrupted labels.
\newblock In {\em ICML}, pages 961--970, 2019.

\bibitem{webly}
Xinlei Chen and Abhinav Gupta.
\newblock Webly supervised learning of convolutional networks.
\newblock In {\em ICCV}, pages 1431--1439, 2015.

\bibitem{ce-ijcai}
Lei Feng, Senlin Shu, Zhuoyi Lin, Fengmao Lv, Li Li, and Bo An.
\newblock Can cross entropy loss be robust to label noise?
\newblock pages 2206--2212. IJCAI, 2020.

\bibitem{maml}
Chelsea Finn, Pieter Abbeel, and Sergey Levine.
\newblock Model-agnostic meta-learning for fast adaptation of deep networks.
\newblock In {\em ICML}, pages 1126--1135, 2017.

\bibitem{sound-noise}
Eduardo Fonseca, Manoj Plakal, Daniel~PW Ellis, Frederic Font, Xavier Favory,
  and Xavier Serra.
\newblock Learning sound event classifiers from web audio with noisy labels.
\newblock In {\em ICASSP}, pages 21--25. IEEE, 2019.

\bibitem{bilevel}
Luca Franceschi, Paolo Frasconi, Saverio Salzo, Riccardo Grazzi, and
  Massimiliano Pontil.
\newblock Bilevel programming for hyperparameter optimization and
  meta-learning.
\newblock In {\em ICML}, pages 1568--1577, 2018.

\bibitem{survey}
Beno{\^\i}t Fr{\'e}nay and Michel Verleysen.
\newblock Classification in the presence of label noise: a survey.
\newblock {\em IEEE transactions on neural networks and learning systems},
  25(5):845--869, 2013.

\bibitem{ghosh2017}
Aritra Ghosh, Himanshu Kumar, and PS Sastry.
\newblock Robust loss functions under label noise for deep neural networks.
\newblock In {\em AAAI}, pages 1919--1925, 2017.

\bibitem{ghosh2015}
Aritra Ghosh, Naresh Manwani, and PS Sastry.
\newblock Making risk minimization tolerant to label noise.
\newblock {\em Neurocomputing}, 160:93--107, 2015.

\bibitem{ghosh2017-dt}
Aritra Ghosh, Naresh Manwani, and PS Sastry.
\newblock On the robustness of decision tree learning under label noise.
\newblock In {\em PAKDD}, pages 685--697. Springer, Cham, 2017.

\bibitem{github}
Github.
\newblock Awesome-learning-with-label-noise.
\newblock
  \href{https://github.com/subeeshvasu/Awesome-Learning-with-Label-Noise}{https://github.com/subeeshvasu/Awesome-Learning-with-Label-Noise},
  2020.

\bibitem{s-model}
Jacob Goldberger and Ehud Ben-Reuven.
\newblock Training deep neural-networks using a noise adaptation layer.
\newblock In {\em ICLR}, 2017.

\bibitem{masking}
Bo Han, Jiangchao Yao, Gang Niu, Mingyuan Zhou, Ivor Tsang, Ya Zhang, and
  Masashi Sugiyama.
\newblock Masking: A new perspective of noisy supervision.
\newblock In {\em NeurIPS}, pages 5836--5846, 2018.

\bibitem{co-teaching}
Bo Han, Quanming Yao, Xingrui Yu, Gang Niu, Miao Xu, Weihua Hu, Ivor Tsang, and
  Masashi Sugiyama.
\newblock Co-teaching: Robust training of deep neural networks with extremely
  noisy labels.
\newblock In {\em NeurIPS}, pages 8527--8537, 2018.

\bibitem{resnet}
Kaiming He, Xiangyu Zhang, Shaoqing Ren, and Jian Sun.
\newblock Deep residual learning for image recognition.
\newblock In {\em CVPR}, pages 770--778, 2016.

\bibitem{glc}
Dan Hendrycks, Mantas Mazeika, Duncan Wilson, and Kevin Gimpel.
\newblock Using trusted data to train deep networks on labels corrupted by
  severe noise.
\newblock In {\em NeurIPS}, pages 10456--10465, 2018.

\bibitem{hu2019noise}
Wei Hu, Yangyu Huang, Fan Zhang, and Ruirui Li.
\newblock Noise-tolerant paradigm for training face recognition cnns.
\newblock In {\em CVPR}, pages 11887--11896, 2019.

\bibitem{o2u}
Jinchi Huang, Lie Qu, Rongfei Jia, and Binqiang Zhao.
\newblock O2u-net: A simple noisy label detection approach for deep neural
  networks.
\newblock In {\em ICCV}, pages 3326--3334, 2019.

\bibitem{spl-div}
Lu Jiang, Deyu Meng, Shoou-I Yu, Zhenzhong Lan, Shiguang Shan, and Alexander
  Hauptmann.
\newblock Self-paced learning with diversity.
\newblock In {\em NeurIPS}, pages 2078--2086, 2014.

\bibitem{self-paced}
Lu Jiang, Deyu Meng, Qian Zhao, Shiguang Shan, and Alexander~G Hauptmann.
\newblock Self-paced curriculum learning.
\newblock In {\em AAAI}, page~6, 2015.

\bibitem{mentornet}
Lu Jiang, Zhengyuan Zhou, Thomas Leung, Li-Jia Li, and Li Fei-Fei.
\newblock Mentornet: Learning data-driven curriculum for very deep neural
  networks on corrupted labels.
\newblock In {\em ICML}, pages 2304--2313, 2018.

\bibitem{nlnl}
Youngdong Kim, Junho Yim, Juseung Yun, and Junmo Kim.
\newblock Nlnl: Negative learning for noisy labels.
\newblock In {\em ICCV}, pages 101--110, 2019.

\bibitem{spl}
M~Pawan Kumar, Benjamin Packer, and Daphne Koller.
\newblock Self-paced learning for latent variable models.
\newblock In {\em NeurIPS}, pages 1189--1197, 2010.

\bibitem{fisher-estimating}
N Lawrence and Bernhard Sch{\"o}lkopf.
\newblock Estimating a kernel fisher discriminant in the presence of label
  noise.
\newblock In {\em ICML}, pages 306--306, 2001.

\bibitem{cleannet}
Kuang-Huei Lee, Xiaodong He, Lei Zhang, and Linjun Yang.
\newblock Cleannet: Transfer learning for scalable image classifier training
  with label noise.
\newblock In {\em CVPR}, pages 5447--5456, 2018.

\bibitem{l2l-noise}
Junnan Li, Yongkang Wong, Qi Zhao, and Mohan~S Kankanhalli.
\newblock Learning to learn from noisy labeled data.
\newblock In {\em CVPR}, pages 5051--5059, 2019.

\bibitem{learning-distill}
Yuncheng Li, Jianchao Yang, Yale Song, Liangliang Cao, Jiebo Luo, and Li-Jia
  Li.
\newblock Learning from noisy labels with distillation.
\newblock In {\em ICCV}, pages 1910--1918, 2017.

\bibitem{focal}
Tsung-Yi Lin, Priya Goyal, Ross Girshick, Kaiming He, and Piotr Doll{\'a}r.
\newblock Focal loss for dense object detection.
\newblock In {\em ICCV}, pages 2980--2988, 2017.

\bibitem{peer}
Yang Liu and Hongyi Guo.
\newblock Peer loss functions: Learning from noisy labels without knowing noise
  rates.
\newblock In {\em ICML}, 2020.

\bibitem{normalized-loss}
Xingjun Ma, Hanxun Huang, Yisen Wang, Simone Romano, Sarah Erfani, and James
  Bailey.
\newblock Normalized loss functions for deep learning with noisy labels.
\newblock In {\em ICML}, 2020.

\bibitem{d2l}
Xingjun Ma, Yisen Wang, Michael~E Houle, Shuo Zhou, Sarah Erfani, Shutao Xia,
  Sudanthi Wijewickrema, and James Bailey.
\newblock Dimensionality-driven learning with noisy labels.
\newblock In {\em ICML}, pages 3355--3364, 2018.

\bibitem{Mandal_2020_WACV}
Devraj Mandal, Shrisha Bharadwaj, and Soma Biswas.
\newblock A novel self-supervised re-labeling approach for training with noisy
  labels.
\newblock In {\em WACV}, pages 1381--1390, March 2020.

\bibitem{natarajan}
Nagarajan Natarajan, Inderjit~S Dhillon, Pradeep~K Ravikumar, and Ambuj Tewari.
\newblock Learning with noisy labels.
\newblock In {\em NeurIPS}, pages 1196--1204, 2013.

\bibitem{nettleton}
David~F Nettleton, Albert Orriols-Puig, and Albert Fornells.
\newblock A study of the effect of different types of noise on the precision of
  supervised learning techniques.
\newblock {\em Artificial intelligence review}, 33(4):275--306, 2010.

\bibitem{reptile}
Alex Nichol and John Schulman.
\newblock Reptile: a scalable metalearning algorithm.
\newblock {\em arXiv preprint arXiv:1803.02999}, 2(3):4, 2018.

\bibitem{loss-factor}
Giorgio Patrini, Frank Nielsen, Richard Nock, and Marcello Carioni.
\newblock Loss factorization, weakly supervised learning and label noise
  robustness.
\newblock In {\em ICML}, pages 708--717, 2016.

\bibitem{forward}
Giorgio Patrini, Alessandro Rozza, Aditya Krishna~Menon, Richard Nock, and
  Lizhen Qu.
\newblock Making deep neural networks robust to label noise: A loss correction
  approach.
\newblock In {\em CVPR}, pages 1944--1952, 2017.

\bibitem{ravi-few-shot}
Sachin Ravi and Hugo Larochelle.
\newblock Optimization as a model for few-shot learning.
\newblock In {\em ICLR}, 2017.

\bibitem{reed}
Scott Reed, Honglak Lee, Dragomir Anguelov, Christian Szegedy, Dumitru Erhan,
  and Andrew Rabinovich.
\newblock Training deep neural networks on noisy labels with bootstrapping.
\newblock {\em ICLR Workshops}, 2015.

\bibitem{l2rw}
Mengye Ren, Wenyuan Zeng, Bin Yang, and Raquel Urtasun.
\newblock Learning to reweight examples for robust deep learning.
\newblock In {\em ICML}, pages 4334--4343, 2018.

\bibitem{robbins1951stochastic}
Herbert Robbins and Sutton Monro.
\newblock A stochastic approximation method.
\newblock {\em The annals of mathematical statistics}, pages 400--407, 1951.

\bibitem{scott}
Clayton Scott, Gilles Blanchard, and Gregory Handy.
\newblock Classification with asymmetric label noise: Consistency and maximal
  denoising.
\newblock In {\em COLT}, pages 489--511, 2013.

\bibitem{combinatorial}
Paul~Hongsuck Seo, Geeho Kim, and Bohyung Han.
\newblock Combinatorial inference against label noise.
\newblock In {\em NeurIPS}, pages 1173--1183, 2019.

\bibitem{servedio}
Rocco~A Servedio.
\newblock Smooth boosting and learning with malicious noise.
\newblock {\em Journal of Machine Learning Research}, 4(Sep):633--648, 2003.

\bibitem{mwnet}
Jun Shu, Qi Xie, Lixuan Yi, Qian Zhao, Sanping Zhou, Zongben Xu, and Deyu Meng.
\newblock Meta-weight-net: Learning an explicit mapping for sample weighting.
\newblock In {\em NeurIPS}, pages 1919--1930, 2019.

\bibitem{prototypical}
Jake Snell, Kevin Swersky, and Richard Zemel.
\newblock Prototypical networks for few-shot learning.
\newblock In {\em NeurIPS}, pages 4077--4087, 2017.

\bibitem{joint}
Daiki Tanaka, Daiki Ikami, Toshihiko Yamasaki, and Kiyoharu Aizawa.
\newblock Joint optimization framework for learning with noisy labels.
\newblock In {\em CVPR}, pages 5552--5560, 2018.

\bibitem{unhinged}
Brendan Van~Rooyen, Aditya Menon, and Robert~C Williamson.
\newblock Learning with symmetric label noise: The importance of being
  unhinged.
\newblock In {\em NeurIPS}, pages 10--18, 2015.

\bibitem{veit2017learning}
Andreas Veit, Neil Alldrin, Gal Chechik, Ivan Krasin, Abhinav Gupta, and Serge
  Belongie.
\newblock Learning from noisy large-scale datasets with minimal supervision.
\newblock In {\em CVPR}, pages 839--847, 2017.

\bibitem{iterative-openset}
Yisen Wang, Weiyang Liu, Xingjun Ma, James Bailey, Hongyuan Zha, Le Song, and
  Shu-Tao Xia.
\newblock Iterative learning with open-set noisy labels.
\newblock In {\em CVPR}, pages 8688--8696, 2018.

\bibitem{symmetric-ce}
Yisen Wang, Xingjun Ma, Zaiyi Chen, Yuan Luo, Jinfeng Yi, and James Bailey.
\newblock Symmetric cross entropy for robust learning with noisy labels.
\newblock In {\em ICCV}, pages 322--330, 2019.

\bibitem{meta-noise-training}
Zhen Wang, Guosheng Hu, and Qinghua Hu.
\newblock Training noise-robust deep neural networks via meta-learning.
\newblock In {\em CVPR}, pages 4524--4533, 2020.

\bibitem{xiao2015learning}
Tong Xiao, Tian Xia, Yi Yang, Chang Huang, and Xiaogang Wang.
\newblock Learning from massive noisy labeled data for image classification.
\newblock In {\em CVPR}, pages 2691--2699, 2015.

\bibitem{ldmi}
Yilun Xu, Peng Cao, Yuqing Kong, and Yizhou Wang.
\newblock L\_dmi: An information-theoretic noise-robust loss function.
\newblock 32:6225--6236, 2019.

\bibitem{safe-noise}
Jiangchao Yao, Hao Wu, Ya Zhang, Ivor~W Tsang, and Jun Sun.
\newblock Safeguarded dynamic label regression for noisy supervision.
\newblock In {\em AAAI}, volume~33, pages 9103--9110, 2019.

\bibitem{probabilistic-correction}
Kun Yi and Jianxin Wu.
\newblock Probabilistic end-to-end noise correction for learning with noisy
  labels.
\newblock In {\em CVPR}, pages 7017--7025, 2019.

\bibitem{disagreement}
Xingrui Yu, Bo Han, Jiangchao Yao, Gang Niu, Ivor Tsang, and Masashi Sugiyama.
\newblock How does disagreement help generalization against label corruption?
\newblock In {\em ICML}, pages 7164--7173, 2019.

\bibitem{yuan2018iterative}
Bodi Yuan, Jianyu Chen, Weidong Zhang, Hung-Shuo Tai, and Sara McMains.
\newblock Iterative cross learning on noisy labels.
\newblock In {\em WACV}, pages 757--765. IEEE, 2018.

\bibitem{wide}
Sergey Zagoruyko and Nikos Komodakis.
\newblock Wide residual networks.
\newblock In {\em BMVC}, 2016.

\bibitem{metacleaner}
Weihe Zhang, Yali Wang, and Yu Qiao.
\newblock Metacleaner: Learning to hallucinate clean representations for
  noisy-labeled visual recognition.
\newblock In {\em CVPR}, pages 7373--7382, 2019.

\bibitem{generalized-ce}
Zhilu Zhang and Mert Sabuncu.
\newblock Generalized cross entropy loss for training deep neural networks with
  noisy labels.
\newblock In {\em NeurIPS}, pages 8778--8788, 2018.

\end{thebibliography}
}
\begin{center}
\Large \bf 
Supplementary Material to\\
\large Do We Really Need Gold Samples for Sample Weighting under Label Noise?
\end{center}
\section{Convergence of Robust-Meta-Noisy-Weight-Network}
	We will detail the proof of convergence in Theorem~\ref{thm:converge} for our proposed method. Recall when we have clean  meta samples, the meta loss is computed as
	\begin{align}
	\mathcal{L}^{\meta}(\mathbf{w}^*(\Theta))=\frac{1}{M} \sum_{j=1}^M \ell^{\cdot,\text{meta}}(\mathbf{w}^*(\Theta)),
	\end{align}
	whereas for corrupted meta samples, the meta loss is computed as,
		\begin{align}
	\mathcal{L}^{\text{noisy-meta}}(\mathbf{w}^*(\Theta))=\frac{1}{M} \sum_{j=1}^M \ell^{\cdot,\text{noisy-meta}}(\mathbf{w}^*(\Theta)),
	\end{align}
where $\mathbf{w}^*$ is the optimal classifier network, and $\Theta$ is the parameter of the weighting network. 
 The classifier network is trained on the following objective,
	\begin{align}\label{eqob}
	\mathcal{L}^{\train}(\mathbf{w};\Theta) = \frac{1}{N} \sum_{i=1}^N \mathcal{W}(\ell_{\ce}^{\cdot, \train}(\mathbf{w});\mathbf{\Theta}) \ell_{\ce}^{\cdot,train}(\mathbf{w}).
	\end{align}

We use the following lemma from \cite{mwnet} for proving convergence results.
\begin{lemma}\label{lem:lemma1}
		Suppose the meta loss function is Lipschitz smooth with constant $L$, and $\mathcal{W}(\cdot)$ is differential with a $\delta$-bounded gradient and twice differential with its Hessian bounded by $\mathcal{B}$, and the loss function $\ell_{\ce}^{\cdot, \train}$ have $\rho$-bounded gradients with respect to training/meta data. Then the gradient of $\Theta$ with respect to meta loss is Lipschitz continuous.
	\end{lemma}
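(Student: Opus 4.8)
The plan is to prove Lipschitz continuity directly from the closed form of the meta-gradient in Eq.~\ref{eq:grad}. That expression writes $\nabla_\Theta \CL^{\meta}$ as a finite minibatch sum, scaled by $-\alpha/n$, whose $i$-th term is a product of the \emph{scalar}
\[
a_i(\Theta) = \CG(\hat{\bw}(\Theta))^{\intercal}\, \nabla_{\bw} \ell_{\ce}^{i,\train}(\bw)\big|_{\bw^t}
\]
and the \emph{vector} $b_i(\Theta) = \partial \CW(\ell_{\ce}^{i,\train}(\bw^t);\Theta)/\partial\Theta$. The key observation is that the training gradients $\nabla_{\bw}\ell_{\ce}^{i,\train}|_{\bw^t}$ and the scalar loss inputs $\ell_{\ce}^{i,\train}(\bw^t)$ are all evaluated at the fixed current weight $\bw^t$, so they carry no $\Theta$-dependence; the only $\Theta$-dependence enters through $\CG(\hat{\bw}(\Theta))$ and through $\partial\CW/\partial\Theta$. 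Since a product of maps that are each bounded \emph{and} Lipschitz is itself Lipschitz, and finite sums and scalar multiples preserve Lipschitzness, it suffices to establish these two properties for $a_i$ and $b_i$ separately.

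First I would dispatch $b_i$. The $\delta$-bounded gradient of $\CW$ gives $\|b_i(\Theta)\|\le\delta$ immediately, and the $\mathcal{B}$-bounded Hessian of $\CW$ (twice differentiability) gives $\|b_i(\Theta_1)-b_i(\Theta_2)\|\le \mathcal{B}\,\|\Theta_1-\Theta_2\|$, so $b_i$ is bounded and $\mathcal{B}$-Lipschitz. Next I would handle $a_i$. The fixed factor $\nabla_{\bw}\ell_{\ce}^{i,\train}|_{\bw^t}$ is bounded by $\rho$, so it remains to control $\CG(\hat{\bw}(\Theta))$. Boundedness of $\CG$ follows because it is an average of meta-loss gradients, which are $\rho$-bounded under Assumption~1. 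For Lipschitzness I would use a chain argument through the inner update map. From the definition of $\hat{\bw}^t(\Theta)$, the difference $\hat{\bw}(\Theta_1)-\hat{\bw}(\Theta_2)$ equals $-\tfrac{\alpha}{n}\sum_i[\CW(\ell_{\ce}^{i,\train}(\bw^t);\Theta_1)-\CW(\ell_{\ce}^{i,\train}(\bw^t);\Theta_2)]\nabla_{\bw}\ell_{\ce}^{i,\train}|_{\bw^t}$, and using the $\delta$-bounded $\Theta$-gradient of $\CW$ together with the $\rho$-bound on training gradients yields
\[
\|\hat{\bw}(\Theta_1)-\hat{\bw}(\Theta_2)\|\le \alpha\,\delta\,\rho\,\|\Theta_1-\Theta_2\|.
\]
Composing with the Lipschitz smoothness of the meta loss (its gradient is $L$-Lipschitz in $\bw$) gives $\|\CG(\hat{\bw}(\Theta_1))-\CG(\hat{\bw}(\Theta_2))\|\le L\alpha\delta\rho\,\|\Theta_1-\Theta_2\|$, so $a_i$ is bounded and Lipschitz.

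Finally I would assemble the terms using the elementary product estimate $\|a_1 b_1 - a_2 b_2\|\le |a_1|\,\|b_1-b_2\| + |a_1-a_2|\,\|b_2\|$; substituting the bounds and Lipschitz constants found above shows each $a_i b_i$ is Lipschitz, and summing the $n$ terms and multiplying by $\alpha/n$ shows $\nabla_\Theta\CL^{\meta}$ is Lipschitz continuous with an explicit constant assembled from $\alpha, L, \delta, \rho, \mathcal{B}$. I expect the main obstacle to be correctly propagating the $\Theta$-dependence through the inner map $\hat{\bw}(\Theta)$: because $\CG$ enters as the composition $\CG\circ\hat{\bw}$, its $\Theta$-Lipschitz constant is a \emph{product} of the meta-loss smoothness constant and the Lipschitz constant of $\hat{\bw}(\Theta)$, and the latter is Lipschitz only by virtue of the bounded $\Theta$-gradient of $\CW$ and the $\rho$-bounded training gradients. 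Some care is also needed to guarantee that $\CG$ is \emph{bounded} rather than merely Lipschitz, since Lipschitz smoothness alone does not bound a gradient on an unbounded parameter domain; here I would invoke the $\rho$-bounded-gradient clause of Assumption~1.
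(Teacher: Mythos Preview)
Your proposal is correct and well organized. The paper itself does not supply a proof of this lemma: its entire argument is the single sentence ``Detailed proof can be found in \cite{mwnet}.'' So there is no in-paper proof to compare your approach against; what you have written is precisely the kind of direct argument one would expect the cited reference to contain. Your decomposition of the meta-gradient from Eq.~\ref{eq:grad} into a scalar factor $a_i(\Theta)$ and a vector factor $b_i(\Theta)$, followed by bounding and Lipschitz estimates for each and the product inequality, is the natural route and uses every hypothesis in the stated way: the $\delta$-bound and $\mathcal{B}$-Hessian bound on $\CW$ control $b_i$, the $\rho$-bound on training/meta gradients and the $L$-smoothness of the meta loss control $a_i$ through the composition $\CG\circ\hat{\bw}(\Theta)$, and the explicit Lipschitz constant $\alpha\delta\rho$ for $\hat{\bw}(\Theta)$ is exactly right. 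Your closing remark that boundedness of $\CG$ requires the $\rho$-bounded-gradient assumption (and does not follow from Lipschitz smoothness alone) is also on point and is the one place a careless argument could slip.
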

\begin{proof}
Detailed proof can be found in \cite{mwnet}.
\end{proof}

In Theorem~\ref{thm:main}, we showed that the expectation of the meta-gradient remains same for corrupted meta samples under uniform noise model. Next, we bound the variance of meta-gradient under uniformly corrupted meta samples. 
\begin{lemma}\label{lem:lemma2}
Suppose the meta loss function $\ell^{\cdot, \text{meta}}$ ($\ell^{\cdot, \text{noisy-meta}}$), satisfying symmetric condition in Eq.~\ref{eq:sym}, have $\rho$-bounded gradients with respect to meta data. Let the variance of drawing a minibatch (of $m$ samples) randomly is $\sigma^2$. Then the variance of the meta-gradients under uniformly corrupted meta samples (with rate $\eta$) is bounded by $\hat{\sigma}^2 = \sigma^2+\frac{2\eta \rho^2}{m}$. 
	\end{lemma}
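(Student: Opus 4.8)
## Proof Proposal for Lemma~\ref{lem:lemma2}

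The plan is to write the corrupted per-sample meta-gradient as the clean one plus a corruption term, then apply the law of total variance, conditioning on which $m$ samples land in the minibatch. Fix the minibatch and an index $j$ in it, and write $g_j \triangleq \partial\ell(\by_j, f(\bx_j,\hat{\bw}))/\partial\hat{\bw}$ for the clean per-sample meta-gradient and $g_j(c) \triangleq \partial\ell(c, f(\bx_j,\hat{\bw}))/\partial\hat{\bw}$ for the gradient under label $c$. Under uniform noise the observed per-sample gradient is $\tilde g_j = g_j + \xi_j$, where $\xi_j = 0$ with probability $1-\eta$ and $\xi_j = g_j(c) - g_j$ with $c$ drawn uniformly from $\{1,\dots,K\}$ with probability $\eta$, and the $\xi_j$ are independent across $j$ given the minibatch. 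The symmetry assumption (Eq.~\ref{eq:sym}) gives $\sum_{c=1}^K g_j(c) = \tfrac{\partial}{\partial\hat{\bw}}\sum_c \ell(c,\cdot) = 0$, exactly as in the proof of Theorem~\ref{thm:main}; hence $\BE[\xi_j \mid j] = -\eta g_j$ and $\BE[\tilde g_j \mid j] = (1-\eta) g_j$.

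First I would apply the law of total variance to the minibatch meta-gradient $\widehat{\CG} \triangleq \tfrac1m\sum_{j=1}^m \tilde g_j$, splitting on the minibatch choice:
\[
\mathrm{Var}(\widehat{\CG}) = \BE\big[\mathrm{Var}(\widehat{\CG} \mid \text{minibatch})\big] + \mathrm{Var}\big(\BE[\widehat{\CG} \mid \text{minibatch}]\big).
\]
For the second term, $\BE[\widehat{\CG} \mid \text{minibatch}] = (1-\eta)\cdot\tfrac1m\sum_j g_j$ is $(1-\eta)$ times the clean minibatch meta-gradient, so this term equals $(1-\eta)^2\sigma^2 \le \sigma^2$. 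For the first term, conditioning on the minibatch freezes the $g_j$ and leaves only the independent corruptions, so $\mathrm{Var}(\widehat{\CG}\mid\text{minibatch}) = \tfrac1{m^2}\sum_j \mathrm{Var}(\xi_j\mid j)$.

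The remaining step is the per-sample bound $\mathrm{Var}(\xi_j\mid j)\le 2\eta\rho^2$. Writing $\xi_j = B\cdot(g_j(c)-g_j)$ with $B\sim\mathrm{Bernoulli}(\eta)$ independent of the uniform $c$, expand $\BE[\|\xi_j\|^2\mid j] = \eta\cdot\tfrac1K\sum_c\|g_j(c)-g_j\|^2$; the identity $\tfrac1K\sum_c g_j(c) = 0$ kills the cross term, leaving $\BE[\|\xi_j\|^2\mid j] = \eta\big(\tfrac1K\sum_c\|g_j(c)\|^2 + \|g_j\|^2\big)$. Subtracting $\|\BE[\xi_j\mid j]\|^2 = \eta^2\|g_j\|^2$ gives $\mathrm{Var}(\xi_j\mid j) = \eta\,\tfrac1K\sum_c\|g_j(c)\|^2 + \eta(1-\eta)\|g_j\|^2 \le \eta\rho^2 + \eta(1-\eta)\rho^2 \le 2\eta\rho^2$, using that $\ell^{\cdot,\meta}$ has $\rho$-bounded gradients for every label argument. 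Hence the first term is at most $\tfrac{2\eta\rho^2}{m}$, and summing the two pieces yields $\mathrm{Var}(\widehat{\CG})\le\sigma^2 + \tfrac{2\eta\rho^2}{m} = \hat\sigma^2$.

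The main obstacle is getting the tight constant $2$: the crude bound $\|g_j(c)-g_j\|\le 2\rho$ would only give $4\eta\rho^2$, so one must expand the conditional variance exactly and invoke the symmetry property $\sum_c g_j(c)=0$ to cancel the cross term, which is precisely what reduces the constant from $4$ to $2$.
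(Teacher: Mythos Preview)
Your proof is correct and reaches the stated bound, but you organize the computation differently from the paper. The paper first computes the variance of a \emph{single} corrupted meta-sample gradient (with randomness over both the sample draw $\zeta_t$ and the noise $\eta_t$) by brute-force expansion of $\|\tilde g_j-(1-\eta)\mathcal{K}\|^2$, bounds it by $m\sigma^2+2\eta\rho^2$, and then invokes the variance-of-sum-of-independent-variables rule to pass to a minibatch of size $m$. You instead apply the law of total variance to the full minibatch estimator $\widehat{\CG}$, conditioning on which samples land in the batch: this cleanly separates the sampling contribution $(1-\eta)^2\sigma^2\le\sigma^2$ from the corruption contribution $\tfrac{1}{m^2}\sum_j\mathrm{Var}(\xi_j\mid j)$. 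Both routes invoke the symmetry identity $\sum_c g_j(c)=0$ at the same place to kill the cross term and obtain the constant $2$ rather than $4$. Your decomposition is more transparent about where each piece of $\hat\sigma^2$ comes from, and it only requires the label corruptions to be conditionally independent given the minibatch (whereas the paper's per-sample-then-average route implicitly needs the minibatch draws themselves to be i.i.d.).
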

	\begin{proof}
	Under clean meta samples, we have,
	\begin{align*}
	    \xi^{t} &= \nabla \mathcal{L}^{\meta}(\hat{\mathbf{w}}^{t}(\Theta^{t}))\big|_{\zeta_t}- \nabla \mathcal{L}^{\meta}(\hat{\mathbf{w}}^{t}(\Theta^{t}))\\
	    &= \nabla \mathcal{L}^{\meta}(\hat{\mathbf{w}}^{t}(\Theta^{t}))\big|_{\zeta_t}- \mathcal{K}(\hat{\bw}(\Theta^t))
	\end{align*}
	where the mini-batch  of size $m$, $\zeta_t$ is drawn uniformly from the entire clean meta data set and $\mathcal{K}(\hat{\bw}(\Theta^t))$ is the unbiased meta-gradient.  We also have
	    $\BE[||\xi^t||^2]  = \sigma^2$ for clean meta dataset.
	
	Under corrupted meta dataset, we have,
		\begin{align*}
	    \xi^{t} &= \nabla \mathcal{L}^{\text{noisy-meta}}(\hat{\mathbf{w}}^{t}(\Theta^{t}))\big|_{\zeta_t, \eta_t}- (1-\eta)\nabla\mathcal{L}^{meta}(\hat{\mathbf{w}}^{t}(\Theta^{t}))\\
	    &= \nabla \mathcal{L}^{\text{noisy-meta}}(\hat{\mathbf{w}}^{t}(\Theta^{t}))\big|_{\zeta_t,\eta_t}- (1-\eta)\mathcal{K}(\hat{\bw}(\Theta^t))
	\end{align*}
	since, we have shown in Theorem~\ref{thm:main}, meta-gradients of the corrupted meta dataset is upto a constant of the unbiased ones when the meta loss function satisfies symmetric condition.
	We note that for noisy meta dataset, the meta-gradient is,
	\[ \frac{1}{m}\sum_{j=1}^m\frac{\partial\ell^{j,\text{noisy-meta}}(\hat{\bw}(\Theta^{t}))}{\partial \hat{\bw}(\Theta^{t})}\Big|_{\hat{\bw}^t}\]
	We compute variance for a single meta sample and then use the variance of sum of independent random variable rule to compute the final variance. Note, for a single sample, $\BE[||\nabla \mathcal{L}^{\meta}(\hat{\mathbf{w}}^{t}(\Theta^{t}))\big|_{\zeta_t}- \mathcal{K}(\hat{\bw}(\Theta^t))||^2]=m\sigma^2$. Now we can compute the variance of corrupted meta-gradient when mini batch size is 1.
	\begin{align*}
	    &\BE_{\zeta_t, \eta_t}[||\xi^t||^2] = \BE_{\zeta_t, \eta_t}[|| \nabla \mathcal{L}^{\text{noisy-meta}}(\hat{\mathbf{w}}^{t}(\Theta^{t}))\big|_{\zeta_t,\eta_t}\\
	    &\quad-(1-\eta)\mathcal{K}(\hat{\bw}(\Theta^t))||^2]\\
	    &= \BE_{\zeta_t, \eta_t}\Big[ \mathcal{L}^{\text{noisy-meta}}(\hat{\mathbf{w}}^{t}(\Theta^{t}))\big|_{\zeta_t,\eta_t}^{\intercal}\mathcal{L}^{\text{noisy-meta}}(\hat{\mathbf{w}}^{t}(\Theta^{t}))\big|_{\zeta_t,\eta_t}\\
	    &\quad+ (1-\eta)^2\mathcal{K}(\hat{\bw}(\Theta^t))^{\intercal}\mathcal{K}(\hat{\bw}(\Theta^t)) \\
	    &\quad-2(1-\eta) \mathcal{K}(\hat{\bw}(\Theta^t))^{\intercal}\mathcal{L}^{\text{noisy-meta}}(\hat{\mathbf{w}}^{t}(\Theta^{t}))\big|_{\zeta_t,\eta_t}\Big]\\
	    &=\BE_{\zeta_t}\Big[(1-\eta) ||\frac{\partial\ell^{\cdot,\meta}(\hat{\bw})}{\partial \hat{\bw}}||^2 +(1-\eta)^2||\mathcal{K}(\hat{\bw}(\Theta^t))||^2\\
	    &\quad-2(1-\eta)^2 \mathcal{K}(\hat{\bw}(\Theta^t))^{\intercal} \frac{\partial\ell^{\cdot,\meta}(\hat{\bw})}{\partial \hat{\bw}}+\\
	    &\quad+\frac{2\eta}{K}\sum_c ||\frac{\partial\ell^{\meta}(c, f(\bx,\hat{\bw}))}{\partial \hat{\bw}}||^2  \\
	    &\quad-2\frac{\eta(1-\eta)}{K}\mathcal{K}(\hat{\bw}(\Theta^t))^{\intercal} \sum_c \frac{\partial\ell^{\meta}(c, f(\bx,\hat{\bw}))}{\partial \hat{\bw}} \Big]\\
	    &\leq \BE_{\zeta_t}\Big[||\nabla \mathcal{L}^{\meta}(\hat{\mathbf{w}}^{t}(\Theta^{t}))\big|_{\zeta_t}- \mathcal{K}(\hat{\bw}(\Theta^t))||^2\Big]\\
	    &\quad+ \frac{2\eta}{K}\sum_c ||\frac{\partial\ell^{\meta}(c, f(\bx,\hat{\bw}))}{\partial \hat{\bw}}||^2 \\
	     &= m\sigma^2 + 2\eta \rho^2
	\end{align*}
For a minibatch of size of $m$, the variance decreases as,
\begin{align}
       \BE_{\zeta_t, \eta_t}[||\xi^t||^2] \leq \frac{m}{m^2}(m\sigma^2 + \eta \rho^2)= \sigma^2 + \frac{2\eta \rho^2}{m}
\end{align}
	\end{proof}
With Lemma~\ref{lem:lemma1} and ~\ref{lem:lemma2}, we can now prove Theorem~\ref{thm:converge}. We will prove only for corrupted meta datasets. Proving convergence of clean meta dataset is easy and can be obtained by simply putting $\eta=0$ in the noisy result.
	\begin{theorem} \label{th1}
		Suppose the meta loss function $\ell$ and the classifier network loss $\ell_{\ce}^{\cdot, \train}$ is Lipschitz smooth with constant $L$, and have $\rho$-bounded gradients. The weighting function $\mathcal{W}(\cdot)$ has bounded gradient and twice differential with bounded Hessian. Let the classifier network learning rate $\alpha_t$ satisfies $\alpha_t=\min\{1,\frac{k}{T}\}$, for some $k>0, k<T$. The learning rate of the weighting network satisfies $\beta_t =\min\{\frac{1}{L},\frac{b}{\hat{\sigma}\sqrt{T}}\} $ for some $b>0$, such that $\frac{\hat{\sigma}\sqrt{T}}{b}\geq L$ where $\hat{\sigma}^2$ is the variance of drawing a minibatch corrupted with noise. Then Robust-Meta-Noisy-Weight-Net can achieve $\mathbb{E}[ \|\nabla \mathcal{L}^{\text{noisy-meta}}(\hat{\mathbf{w}}^{t}(\Theta^{t}))\|_2^2] \leq \epsilon$ in $\mathcal{O}(\frac{\hat{\sigma}^2}{(1-\eta)^2\epsilon^2})$ steps when meta loss function $\ell$ satisfies symmetric condition in Eq.~\ref{eq:sym}. In particular,
		\begin{align}
		\min_{0\leq t \leq T} \mathbb{E}[ \|\nabla \mathcal{L}^{\text{noisy-meta}}(\hat{\mathbf{w}}^{t}(\Theta^{t}))\|_2^2] \leq \mathcal{O}(\frac{\hat{\sigma}}{(1-\eta)\sqrt{T}}).
		\end{align}
	\end{theorem}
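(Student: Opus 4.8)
The plan is to run the textbook analysis of stochastic gradient descent on a smooth nonconvex objective --- here the outer meta loss seen as a function of the weighting parameter $\Theta$ --- and to feed in the two facts already established. Lemma~\ref{lem:lemma1} gives the one structural input about the outer problem: under the stated assumptions $\Theta \mapsto \CL^{\meta}(\hat\bw^t(\Theta))$ has an $L$-Lipschitz gradient. Lemma~\ref{lem:lemma2} and Theorem~\ref{thm:main} give the stochastic-oracle inputs: the minibatch meta-gradient $g_t$ used in update~\ref{eq:meta-update} is an unbiased estimator of $(1-\eta)$ times the clean full meta-gradient $\nabla\CL^{\meta}(\Theta^t)$, with the centered second moment bounded by $\hat\sigma^2=\sigma^2+2\eta\rho^2/m$. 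Since setting $\eta=0$ recovers the clean-sample statement with $\hat\sigma=\sigma$, I would only carry out the corrupted case.

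First I would write the weighting-network step as $\Theta^{t+1}=\Theta^t-\beta_t g_t$, split $g_t=(1-\eta)\nabla\CL^{\meta}(\Theta^t)+\xi^t$ with $\BE[\xi^t\mid\Theta^t]=0$ and $\BE[\|\xi^t\|^2\mid\Theta^t]\le\hat\sigma^2$, and apply the descent inequality from $L$-smoothness along this step. Taking conditional expectation and using $\BE[\|g_t\|^2]\le(1-\eta)^2\|\nabla\CL^{\meta}(\Theta^t)\|^2+\hat\sigma^2$ produces the one-round recursion
\[
\BE[\CL^{\meta}(\Theta^{t+1})] \le \BE[\CL^{\meta}(\Theta^{t})] - \Big(\beta_t(1-\eta)-\tfrac{L\beta_t^2(1-\eta)^2}{2}\Big)\BE[\|\nabla\CL^{\meta}(\Theta^{t})\|^2] + \tfrac{L\beta_t^2}{2}\hat\sigma^2 + r_t,
\]
where $r_t$ collects the residual from the classifier weights moving between rounds. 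Since $\beta_t\le 1/L$, the gradient coefficient is at least $\beta_t(1-\eta)/2$; summing over $t=0,\dots,T$, telescoping, and using that the meta loss is bounded below lets me divide by $\tfrac{1-\eta}{2}\sum_t\beta_t$ to bound $\min_t\BE[\|\nabla\CL^{\meta}(\Theta^t)\|^2]$ by a leading term of order $\frac{1}{(1-\eta)\sum_t\beta_t}$ plus an error of order $\frac{\hat\sigma^2\sum_t\beta_t^2+\sum_t r_t}{(1-\eta)\sum_t\beta_t}$. Plugging in the constant step size $\beta_t=\min\{1/L,\,b/(\hat\sigma\sqrt T)\}$, which equals $b/(\hat\sigma\sqrt T)$ once $T$ is large enough that $\hat\sigma\sqrt T/b\ge L$, collapses both of the first two contributions to $\mathcal{O}\!\big(\hat\sigma/((1-\eta)\sqrt T)\big)$; rearranging to reach accuracy $\epsilon$ gives the $\mathcal{O}\!\big(\hat\sigma^2/((1-\eta)^2\epsilon^2)\big)$ iteration count, and $\eta=0$ gives the clean rate $\mathcal{O}(\sigma/\sqrt T)$. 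Up to the proportionality constant of Theorem~\ref{thm:main} this is exactly the bound on $\|\nabla\CL^{\neta}\|^2$ claimed.

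The one place care beyond a mechanical copy of the MW-Net analysis is needed is the residual $r_t$: $\hat\bw^t$ is not a fixed map of $\Theta$ across rounds but is rebuilt from the current classifier weights $\bw^t$, which drift under the classifier SGD step with rate $\alpha_t$. I would bound the discrepancy $\CL^{\meta}(\hat\bw^{t+1}(\Theta^{t+1}))-\CL^{\meta}(\hat\bw^{t}(\Theta^{t+1}))$ via $\rho$-bounded gradients and $L$-smoothness, which makes it $\mathcal{O}(\alpha_t)$ per round; since $\alpha_t=\min\{1,k/T\}=\mathcal{O}(1/T)$, its accumulated share $\sum_t r_t/\sum_t\beta_t$ is $\mathcal{O}(1/\sqrt T)$ and is dominated by the main term. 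Everything else is routine: the unbiasedness, the inflated variance $\hat\sigma^2=\sigma^2+2\eta\rho^2/m$ of Lemma~\ref{lem:lemma2}, and the $(1-\eta)$ bias factor of Theorem~\ref{thm:main} --- which is precisely what degrades the rate by $1/(1-\eta)$ --- are already in hand, so the remaining work is just the telescoping bookkeeping.
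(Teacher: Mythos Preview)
Your proposal is correct and follows essentially the same argument as the paper: split the noisy minibatch meta-gradient into $(1-\eta)\nabla\CL^{\meta}$ plus a zero-mean noise term via Theorem~\ref{thm:main}, apply the descent lemma from the Lipschitz smoothness of Lemma~\ref{lem:lemma1}, bound the classifier-drift residual by $\mathcal{O}(\alpha_t)$ using the $\rho$-bounded gradients, telescope, take expectations with the variance bound $\hat\sigma^2$ of Lemma~\ref{lem:lemma2}, and substitute the prescribed step sizes. The only cosmetic difference is that the paper carries $\CL^{\neta}$ through the telescoping while you track $\CL^{\meta}$ and convert at the end via the $(1-\eta)$ proportionality; this is immaterial.
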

	\begin{proof}
		The update of $\Theta$ in each iteration is:
\begin{small}
		\begin{align*}
		\Theta^{t+1} =  \Theta^{t} -\beta_t \nabla \mathcal{L}^{\neta}(\hat{\mathbf{w}}^{t}(\Theta^{t}))\big|_{\zeta_t, \eta_t}.
		\end{align*}
		\end{small}
 We can rewrite the update equation as:
 \begin{small}
		\begin{align*}
		\Theta^{t+1} =  \Theta^{t} -\beta_t[ (1-\eta) \nabla \mathcal{L}^{\meta}(\hat{\mathbf{w}}^{t}(\Theta^{t}))+\xi^{t}],
		\end{align*}
		\end{small}
		where $\xi^{t} = \nabla \mathcal{L}^{\neta}(\hat{\mathbf{w}}^{t}(\Theta^{t}))\big|_{\zeta_t, \eta_t}-(1-\eta) \nabla \mathcal{L}^{\meta}(\hat{\mathbf{w}}^{t}(\Theta^{t}))$. %
		We have, $\mathbb{E}[\xi^{t}]=0$, as we have shown in Theorem~\ref{thm:main}.
		We have,
		\begin{small}
			\begin{align}\label{eq231}
			\begin{split}
			&\mathcal{L}^{\neta}(\hat{\mathbf{w}}^{t+1}(\Theta^{t+1}))-\mathcal{L}^{\neta}(\hat{\mathbf{w}}^{t}(\Theta^{t})) \\
			&= \left\{\mathcal{L}^{\neta}(\hat{\mathbf{w}}^{t+1}(\Theta^{t+1}))- \mathcal{L}^{\neta}(\hat{\mathbf{w}}^{t}(\Theta^{t+1}))\right\}\\
			&+\left\{\mathcal{L}^{\neta}(\hat{\mathbf{w}}^{t}(\Theta^{t+1}))-\mathcal{L}^{\neta}(\hat{\mathbf{w}}^{t}(\Theta^{t}))\right\}.
			\end{split}
			\end{align}
			\end{small}
Since meta loss function is Lipschitz smooth,  we have
\begin{small}
		\begin{align*}
			&\mathcal{L}^{\neta}(\hat{\mathbf{w}}^{t+1}(\Theta^{t+1}))- \mathcal{L}^{\neta}(\hat{\mathbf{w}}^{t}(\Theta^{t+1})) \\
			&\leq \langle \nabla \mathcal{L}^{\neta}(\hat{\mathbf{w}}^{t}(\Theta^{t+1})), \hat{\mathbf{w}}^{t+1}(\Theta^{t+1})-\hat{\mathbf{w}}^{t}(\Theta^{t+1}) \rangle\\
			&\quad \quad+ \frac{L}{2}\|\hat{\mathbf{w}}^{t+1}(\Theta^{t+1})-\hat{\mathbf{w}}^{t}(\Theta^{t+1})\|_2^2 
			\end{align*}
			\end{small}

Further, using the SGD update equation, we can write, $\hat{\mathbf{w}}^{t+1}(\Theta^{t+1})-\hat{\mathbf{w}}^{t}(\Theta^{t+1}) = - \alpha_t \frac{1}{n}
			\sum_{i=1}^n   \mathcal{W}(\ell_{\ce}^{i,\train}(\mathbf{w}^{t+1});\Theta^{t+1}) \nabla_{\mathbf{w}} \ell_{\ce}^{i,train}(\mathbf{w})\Big|_{\mathbf{w}^{t+1}}$. Thus, using the fact $\left\|\frac{\partial \ell_{\ce}^{i,\train} (\mathbf{w})}{\partial \mathbf{w}}\Big|_{\mathbf{w}^{t}}\right\|\leq \rho$, $\left\|\!\frac{\partial \ell^{j,\neta} (\hat{\mathbf{w}})}{\partial \hat{\mathbf{w}}}\!\Big|_{\hat{\mathbf{w}}^{t}}^T\!\right\|\!\!\leq\!\! \rho$, we can bound,
			\begin{small}
		\begin{align*}
		\|\mathcal{L}^{\neta}(\hat{\mathbf{w}}^{t+1}(\Theta^{t+1}))- \mathcal{L}^{\neta}(\hat{\mathbf{w}}^{t}(\Theta^{t+1})) \|
		\\\leq \alpha_t \rho^2+ \frac{L\alpha_t^2}{2} \rho^2 = \alpha_t\rho^2 (1+\frac{\alpha_t L}{2})
		\end{align*}
		\end{small}

		Since meta loss function is Lipschitz continuous from Lemma~\ref{lem:lemma1},  we get the following,
		\begin{small}
			\begin{align*}
			&\mathcal{L}^{\neta}(\hat{\mathbf{w}}^{t}(\Theta^{t+1}))-\mathcal{L}^{\neta}(\hat{\mathbf{w}}^{t}(\Theta^{t}))  \\
			&\leq \langle\nabla \mathcal{L}^{\neta}(\hat{\mathbf{w}}^{t}(\Theta^{t})),\Theta^{t+1}-\Theta^{t} \rangle + \frac{L}{2} \|\Theta^{t+1}-\Theta^{t}\|_2^2\\
			& = \langle\nabla \mathcal{L}^{meta}(\hat{\mathbf{w}}^{t}(\Theta^{t})), -\beta_t [(1-\eta)\nabla \mathcal{L}^{\neta}(\hat{\mathbf{w}}^{t}(\Theta^{t}))\\
			&\quad\quad+\xi^{t} ] \rangle + \frac{L\beta_t^2}{2} \|(1-\eta)\nabla \mathcal{L}^{\neta}(\hat{\mathbf{w}}^{t}(\Theta^{t}))+\xi^{t}\|_2^2\\
			& = -((1-\eta)\beta_t-(1-\eta)^2\frac{L\beta_t^2}{2}) \|\nabla \mathcal{L}^{\neta}(\hat{\mathbf{w}}^{t}(\Theta^{t}))\|_2^2 +\\
			&\quad\quad \frac{L\beta_t^2}{2}\|\xi^{t}\|_2^2 - (\beta_t-L(1-\eta)\beta_t^2)\langle \nabla \mathcal{L}^{\neta}(\hat{\mathbf{w}}^{t}(\Theta^{t})),\xi^{t}\rangle.
			\end{align*}
			\end{small}
		Using the above inequality, we bound Eq.(\ref{eq231}) as,
		\begin{small}
			\begin{align}
			\begin{split}
			&\mathcal{L}^{\neta}(\hat{\mathbf{w}}^{t+1}(\Theta^{t+1}))-\mathcal{L}^{\neta}(\hat{\mathbf{w}}^{t}(\Theta^{t})) \\
			&\leq \alpha_t\rho^2 (1+\frac{\alpha_t L}{2}) -((1-\eta)\beta_t-(1-\eta)^2\frac{L\beta_t^2}{2})\\
			&\quad\quad \times\|\nabla \mathcal{L}^{\neta}(\hat{\mathbf{w}}^{t}(\Theta^{t}))\|_2^2 + \frac{L\beta_t^2}{2}\|\xi^{t}\|_2^2 \\
			&\quad\quad- (\beta_t-L(1-\eta)\beta_t^2)\langle \nabla \mathcal{L}^{\neta}(\hat{\mathbf{w}}^{t}(\Theta^{t})),\xi^{t}\rangle.
			\end{split}
			\end{align}
			\end{small}
		We can simplifiy as,
		\begin{small}
			\begin{align*}
			\begin{split}
			&((1-\eta)\beta_t-(1-\eta)^2\frac{L\beta_t^2}{2})
			\|\nabla \mathcal{L}^{\neta}(\hat{\mathbf{w}}^{t}(\Theta^{t}))\|_2^2 \\
			&\leq  \alpha_t\rho^2 (1+\frac{\alpha_t L}{2})+\mathcal{L}^{\neta}(\hat{\mathbf{w}}^{t}(\Theta^{t}))\\
			&\quad\quad- \mathcal{L}^{\neta}(\hat{\mathbf{w}}^{t+1}(\Theta^{t+1}))
			+ \frac{L\beta_t^2}{2}\|\xi^{t}\|_2^2 \\
			&\quad\quad- (\beta_t-L(1-\eta)\beta_t^2)\langle \nabla \mathcal{L}^{\neta}(\hat{\mathbf{w}}^{t}(\Theta^{t})),\xi^{t}\rangle.
			\end{split}
		\end{align*}
		\end{small}
We can simplify as,
\begin{small}
			\begin{align}\label{eqrand}
			\begin{split}
			&\sum\nolimits_{t=1}^T ((1-\eta)\beta_t-(1-\eta)^2\frac{L\beta_t^2}{2}) \|\nabla \mathcal{L}^{\neta}(\hat{\mathbf{w}}^{t}(\Theta^{t}))\|_2^2 \\
			&\leq \mathcal{L}^{\neta}(\hat{\mathbf{w}}^{1})(\Theta^{1})-\mathcal{L}^{\neta}(\hat{\mathbf{w}}^{t+1}(\Theta^{t+1}))\\
			&\quad+\sum_{t=1}^T\alpha_t\rho^2 (1+\frac{\alpha_t L}{2})
-\!\sum_{t=1}^T\!(\beta_t\!-\!L(1-\eta)\beta_t^2)\\
&\quad\!\langle\nabla\! \mathcal{L}^{\neta}\!(\hat{\mathbf{w}}^{t}(\Theta^{t})),\!\xi^{t}\!\rangle\!+\!\frac{L}{2}\!\sum_{t=1}^T\!\beta_t^2\|\xi\!^{t}\!\|_2^2 \\
			&\leq \mathcal{L}^{\neta}\!(\hat{\mathbf{w}}^{1}(\!\Theta^{1}\!))\!+\sum_{t=1}^T\alpha_t\rho^2 (1+\frac{\alpha_t L}{2})\\
			&\quad-\!\sum_{t=1}^T \!(\beta_t\!-\!L(1-\eta)\beta_t^2)\!\langle\!\nabla\! \mathcal{L}^{\neta}(\hat{\mathbf{w}}^{t}(\Theta^{t})),\!\xi^{t}\!\rangle\!\\
			&\quad+\!\frac{L}{2}\!\sum_{t=1}^T\!\beta_t^2\|\xi^{t}\|_2^2,
			\end{split}
			\end{align}
			\end{small}
		Taking expectations with respect to $\xi^{N}$ on both sides of Eq. \ref{eqrand}, we get,
		\begin{small}
		\begin{align*}
			\sum_{t=1}^T &((1-\eta)\beta_t-(1-\eta)^2\frac{L\beta_t^2}{2})\mathbb{E}_{\xi^{N}} \|\nabla \mathcal{L}^{\neta}(\hat{\mathbf{w}}^{t}(\Theta^{t}))\|_2^2 \\
			&\leq \mathcal{L}^{\neta}\!(\hat{\mathbf{w}}^{1}(\!\Theta^{1}\!))\!+\sum_{t=1}^T\alpha_t\rho^2 (1+\frac{\alpha_t L}{2})+ \frac{L\hat{\sigma}^2}{2} \sum_{t=1}^T \beta_t^2,
		\end{align*}
		\end{small}
		since $\footnotesize{\mathbb{E}_{\xi^{N}} \langle \nabla \mathcal{L}^{\neta}(\Theta^{t}),\xi^{t}\rangle =0 }$ and $\mathbb{E} [\|\xi^{t}\|_2^2] \leq \hat{\sigma}^2$ (using Lemma~\ref{lem:lemma2},) where
		$\hat{\sigma}^2$ is the variance with respect to $\xi^{t}$.
		Finally, we can obtain the the bound as,
		\begin{small}
		\begin{align}
		\begin{split}
		&\min_{t} \mathbb{E} [ \|\nabla \mathcal{L}^{\neta}(\hat{\mathbf{w}}^{t}(\Theta^{t}))\|_2^2] \\
		&\leq \frac{\sum_{t=1}^T ((1-\eta)\beta_t-(1-\eta)^2\frac{L\beta_t^2}{2})\mathbb{E}_{\xi^{N}} \|\nabla \mathcal{L}^{\eta}(\hat{\mathbf{w}}^{t}(\Theta^{t}))\|_2^2}{\sum_{t=1}^T ((1-\eta)\beta_t-(1-\eta)^2\frac{L\beta_t^2}{2})}\\
		&\leq  \frac{1}{\sum_{t=1}^T (2(1-\eta)\beta_t-(1-\eta)^2L\beta_t^2)} \Big[2\mathcal{L}^{\neta}\!(\hat{\mathbf{w}}^{1}(\!\Theta^{1}\!))\\
		&\quad\quad+\sum_{t=1}^T\alpha_t\rho^2 (2+\alpha_t L\!) + L\hat{\sigma}^2\sum_{t=1}^T \beta_t^2\Big]\\
		& \leq \frac{1}{\sum_{t=1}^T 2(1-\eta)\beta_t} \Big[2\mathcal{L}^{\neta}\!(\hat{\mathbf{w}}^{1}(\!\Theta^{1}\!))\!+\sum_{t=1}^T\alpha_t\rho^2 (2+\alpha_t L) \\
		&\quad\quad+ L\hat{\sigma}^2\sum_{t=1}^T \beta_t^2\Big] \\
		& \leq \frac{1}{2T(1-\eta)\beta_t} \Big[2\mathcal{L}^{\neta}\!(\hat{\mathbf{w}}^{1}(\!\Theta^{1}\!))\!+ \alpha_1 \rho^2 T (2+ L) \\
		&\quad\quad+ L\hat{\sigma}^2 \sum_{t=1}^T \beta_t^2\Big]\\
		& = \frac{\mathcal{L}^{\neta}\!(\hat{\mathbf{w}}^{1}(\!\Theta^{1}\!))\!}{(1-\eta)T} \frac{1}{\beta_t} + \frac{2\alpha_1 \rho^2 (2+L)}{2(1-\eta) \beta_t}+\frac{L\hat{\sigma}^2}{2(1-\eta)T}\sum_{t=1}^T \beta_t\\
		& \leq  \frac{\mathcal{L}^{\neta}\!(\hat{\mathbf{w}}^{1}(\!\Theta^{1}\!))\!}{(1-\eta)T} \frac{1}{\beta_t} + \frac{2\alpha_1 \rho^2 (2+L)}{2(1-\eta)\beta_t}+L\hat{\sigma}^2 \beta_t\frac{1}{2(1-\eta)}\\
		& = \frac{\mathcal{L}^{\neta}\!(\hat{\mathbf{w}}^{1}(\!\Theta^{1}\!))\!}{(1-\eta)T} \max\{L,\frac{\hat{\sigma}\sqrt{T}}{b}\} \\
		&\quad\quad+ \min\{1,\frac{k}{T}\} \max\{L,\frac{\hat{\sigma}\sqrt{T}}{b}\} \frac{\rho^2  (2+L)}{2(1-\eta)}\\
		&\quad\quad+ L \hat{\sigma}^2 \min\{\frac{1}{L},\frac{b}{\hat{\sigma}\sqrt{T}}\}\frac{1}{2(1-\eta)}\\
		& \leq \frac{\hat{\sigma}\mathcal{L}^{\neta}\!(\hat{\mathbf{w}}^{1}(\!\Theta^{1}\!)\!}{(1-\eta)b\sqrt{T}}+ \frac{k\hat{\sigma} \rho^2(2+L)}{b(1-\eta) \sqrt{T}} + \frac{L\hat{\sigma} b}{(1-\eta)\sqrt{T}} \\
		&= \mathcal{O}(\frac{\hat{\sigma}}{(1-\eta)\sqrt{T}}).
		\end{split}
		\end{align}
		\end{small}
		The third inequality holds for $\sum_{t=1}^T (2\beta_t-L\beta_t^2) \geq \sum_{t=1}^T \beta_t$.
		Therefore, Robust-Meta-Noisy-Weight-Network can achieve $\min_{0\leq t \leq T} \mathbb{E}[ \|\nabla \mathcal{L}^{\neta}(\Theta^{t})\|_2^2] \leq \mathcal{O}(\frac{\hat{\sigma}}{(1-\eta)\sqrt{T}})$ in $T$ steps.
	\end{proof}

\section{Additional Results}
	\begin{figure}[pt] \vspace{2mm}
			\includegraphics[width=0.48\textwidth]{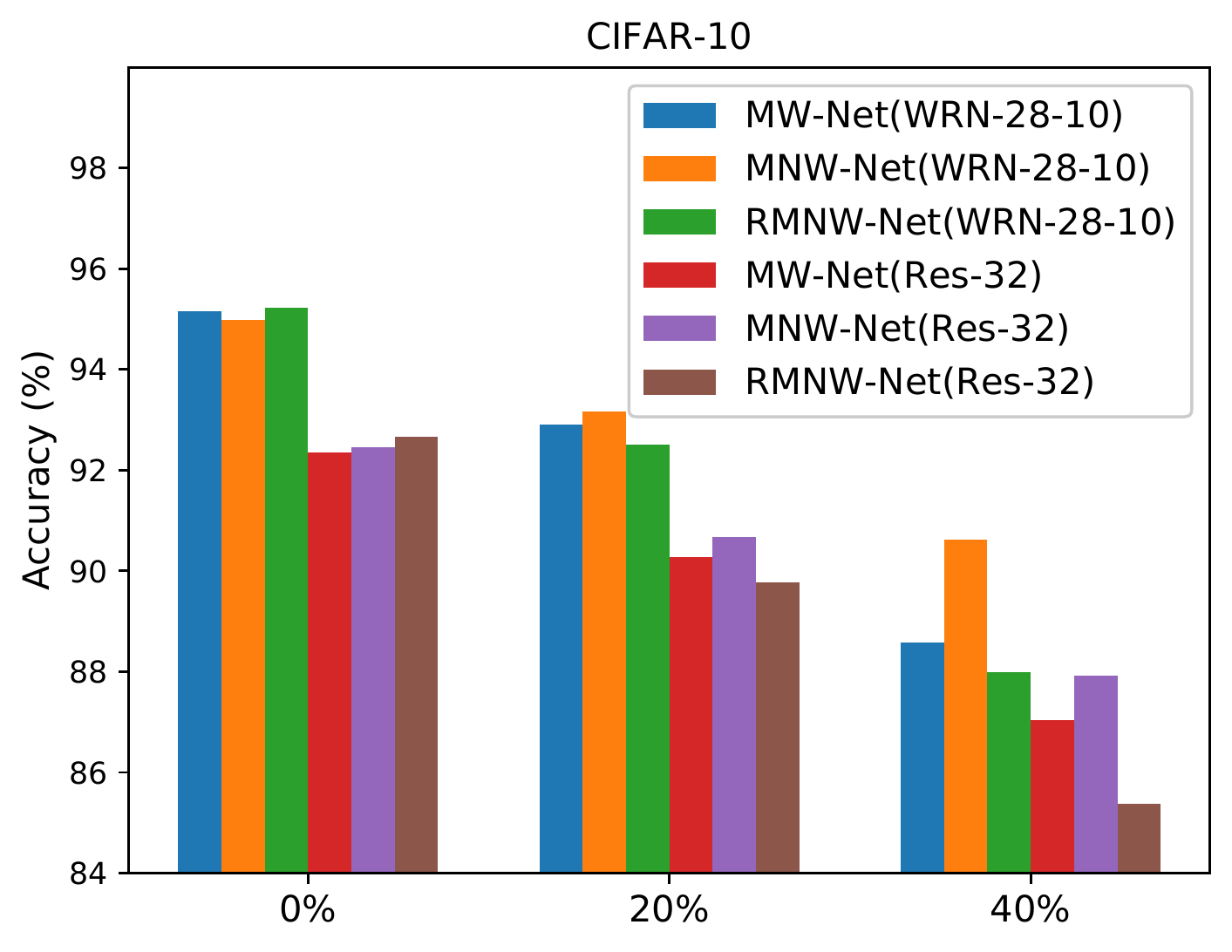} \\
			\includegraphics[width=0.48\textwidth]{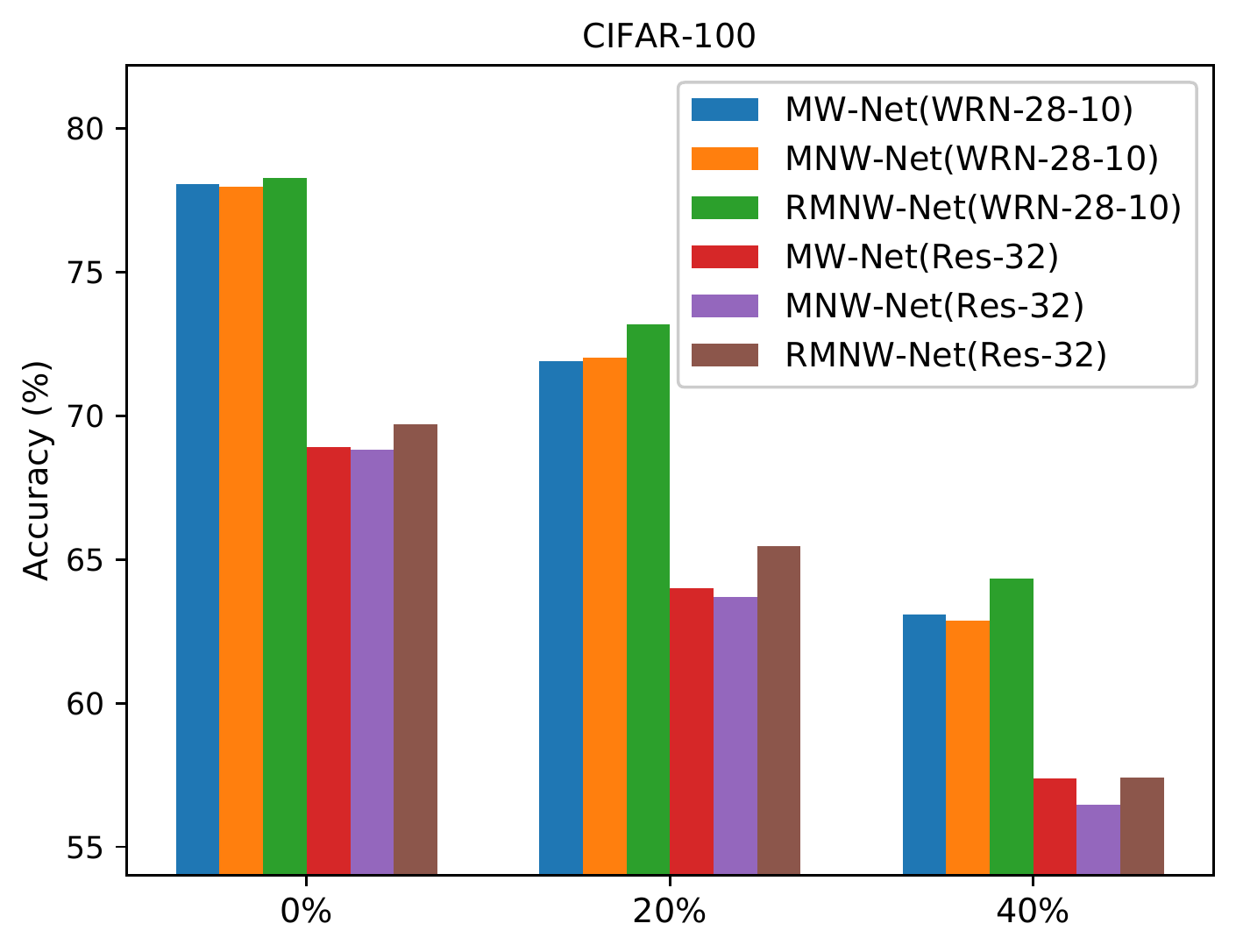}\vspace{0mm}
			\caption{Performance comparison for different classifier architecture (WRN-28-10 and ResNet32) for CIFAR flip2 noise \cite{mwnet}}
			\label{fig:wide-vs-res}
	\end{figure}
\textbf{Indifference to network architecture on flip2 noise model}
In Table ~\ref{tab:main}, we experimented with ResNet-32 architecture for flip2 noise model following \cite{mwnet}. For parity with uniform noise, we also experiment with WRN-28-10 architecture for flip2 noise.  
Figure~\ref{fig:wide-vs-res} shows performances of MW-Net$^{\ast}$, MNW-Net, and RMNW-Net on these two architectures for flip2 noise model. We observe similar trends on both architectures for the flip2 noise model. On CIFAR-10/100, MNW-Net/RMNW-Net performs marginally better than the other on both architectures.

\end{document}